\pgfplotsset{compat=newest}
\newcommand{\note}[1]{{\textbf{\color{red}#1}}}
\newcommand{\noteal}[1]{{\textbf{\color{blue}#1}}}
\newcommand{\note}[1]{}
\newcommand{\noteal}[1]{}
\providecommand{\abs}[1]{\left|#1\right|}
\providecommand{\norm}[1]{\left\|#1\right\|}
\providecommand{\normop}[1]{\left\|#1\right\|_{\text{op}}}
\providecommand{\esp}[1]{\mathbb{E}\left[#1\right]}
\providecommand{\prob}[1]{\mathbb{P}\left\{#1\right\}}
\providecommand{\esps}[2]{\mathbb{E}_{#1}\left[#2\right]}
\providecommand{\Tr}[1]{\mathrm{Tr}\left(#1\right)}
\providecommand{\reel}{\mathbb{R}}
\providecommand{\dd}{\mathrm{d}}
\providecommand{\eps}{\varepsilon}
\providecommand{\dotp}[1]{\langle#1\rangle}
\numberwithin{equation}{section}
\providecommand{\gmax}{\gamma_{\mathrm{max}}}
\providecommand{\mms}{\mathcal{I}}
\providecommand{\syms}{\mathcal{S}(\reel^d)}
\providecommand{\ms}{\mathcal{M}(\reel^d)}
\providecommand{\as}{\ \text{a.s}}
\newtheorem{thm}{Theorem}
\newtheorem{lemma}{Lemma}
\newlength{\figwidth}
\newlength{\largefigwidth}
\newlength{\captionsep}
    \newcommand{\ansection}[1]{\subsection*{#1}}
    \newcommand{\ansubsection}[1]{\subsubsection*{#1}}
    \newcommand{\supmat}{Appendix}
\author{\textbf{Alexandre Défossez} \\
D\'epartement de Math\'ematiques\\
\'Ecole Normale Sup\'erieure \\
 Paris, France \\
\texttt{alexandre.defossez@ens.fr} \\
 \and  \textbf{Francis Bach} \\
D\'epartement d'Informatique\\
\'Ecole Normale Sup\'erieure \\
 Paris, France \\
\texttt{francis.bach@inria.fr} \\
}
\title{Constant Step Size Least-Mean-Square: Bias-Variance Trade-offs and Optimal Sampling Distributions}
\begin{document}
\maketitle

\begin{abstract}

We consider the least-squares regression problem and provide a detailed asymptotic analysis of the performance of averaged constant-step-size stochastic gradient descent (a.k.a.~least-mean-squares). In the strongly-convex case, we provide an asymptotic expansion up to explicit exponentially decaying terms. Our analysis leads to new insights into stochastic approximation algorithms: (a) it gives a tighter bound on the allowed step-size; (b) the generalization error may be divided into a variance term which is decaying as $O(1/n)$, independently of the step-size $\gamma$, and a bias term that decays as $O(1/\gamma^2 n^2)$; (c) when allowing non-uniform sampling, the choice of a good sampling density depends on whether the variance or bias terms dominate. In particular, when the variance term dominates, optimal sampling densities do not lead to much gain, while when the bias term dominates, we can choose larger step-sizes that leads to   significant improvements.
\end{abstract}

\section{Introduction}

\vspace*{-.21cm}

For large-scale supervised machine learning problems, optimization methods based on stochastic gradient descent (SGD) lead to efficient algorithms that make a single or few passes over the data~\citep{ASMB:ASMB538,NIPS2007_3323}.

In recent years, for smooth problems, large step-sizes together with some form of averaging,
 have emerged as having optimal scaling in terms of number of examples, both with asymptotic~\citep{Polyak} and non-asymptotic~\citep{bach11} results. However, these convergence rates in $O(1/n)$ are only optimal in the limit of large samples, and in practice where the asymptotic regime may not be reached, notably because of the high-dimensionality of the data, other non-dominant terms may come into play, which is the main  question we are tackling in this paper.

We consider least-squares regression with constant-step-size stochastic gradient descent---a.k.a.~least-mean-squares---\citep{macchi1995adaptive,bach13}, where the generalization error may be explicitly split into a \emph{bias} term that characterizes how fast initial conditions are forgotten, and a \emph{variance} term that is only impacted by the noise present in the prediction problem. In this paper, we first show that while the variance term is asymptotically dominant, the bias term may play a strong role, both in theory and in practice, that explains convergence behaviors typically seen in applications.

Another question that has emerged as important to improve convergence is the use of special sampling distributions~\citep{nesterov2012efficiency,needell2013stochastic,Zhao2014}. With our theoretical result, we can optimize the first-order asymptotic terms rather than traditional upper-bounds, casting a new light on the potential gains (or lack thereof) of such different sampling distributions.

More precisely, we make the following contributions:

\vspace*{-.25cm}

\begin{list}{\labelitemi}{\leftmargin=1.1em}
   \addtolength{\itemsep}{-.1\baselineskip}

\item[--]
We provide in Section~\ref{sec:main} a detailed asymptotic analysis of the performance of averaged constant-step-size SGD, with all terms up to exponentially decaying ones. We also give in Section~\ref{sec:step} a tighter bound on the allowed step-size $\gamma$.

\item[--]
In Section~\ref{sec:comparing}, the generalization error may be divided into a variance term which is (up to first order) decaying as $O(1/n)$, independently of the step-size $\gamma$, and a bias term that decays as $O(1/\gamma^2 n^2)$.

\item[--] When allowing non-uniform sampling, the choice of a good sampling density depends on whether the variance or bias terms dominate. In particular, as shown in Section~\ref{sec:sampling}, when the variance term dominates, optimal sampling densities do not lead to much gain, while when the bias term dominates, we can choose larger step-sizes that leads to   significant improvements.

\end{list}

\subsection{Problem setup}
\label{sec:setup}

\vspace*{-.2cm}

Let $X$ be a random variable with values in $  \reel^d$ and $Y$ another random variable with values in $ \reel$. Throughout this paper, $\| \cdot \|$ denotes the Euclidean norm on~$\reel^d$. We assume that $    \esp{ \norm{X}^2} = \esp{X^T X} $ is finite and we denote by $H = \esp{X X^T}\in \reel^{d\times d}$ the second-order moment  matrix of $X$. Throughout the paper, we assume that $H$ is invertible, or equivalently, in optimization terms, that we are in the strongly convex case \citep[see, e.g.,][]{nesterov2004introductory}. We   denote by $\mu$ the smallest eigenvalue of $H$, so that we have $\mu > 0$. Note that in our asymptotic results, the leading terms do not depend explicitly on $\mu$.

We wish to solve the following  optimization problem:
\begin{align}
\label{eq:opt_pb}
    \min_{w\in\reel^d} \esp{\norm{X^T w - Y}^2},
\end{align}
from a stream of independent and identically distributed samples $(X_i, Y_i)_{i \geqslant 0}$.

For any given $w\in\reel^d$, we   denote by $f(w) = \esp{\norm{X^T w - Y}^2}$ the expected loss;
 we  denote by $w^* \in \reel^d$ the optimal solution (as $H$ is invertible, it is unique), and by $f^* =f(w^*)\in \reel$ the value at the minimum.

This set-up covers two common situations:

\vspace*{-.15cm}

\begin{list}{\labelitemi}{\leftmargin=1.3em}
   \addtolength{\itemsep}{-.1\baselineskip}

\item[(a)]
 \emph{Single pass through the data}, where each observation is seen once and considered as an i.i.d.~sample, which is the context we explicitly study in this paper; note that then, our bounds are on the \emph{testing error}, i.e., on the expected error on unseen data.
 \item[(b)] \emph{Multiple passes through a finite dataset}, where each sample $(X_i,Y_i)$ is selected uniformly at random from the dataset; in this situation, the \emph{training error} is explicitly minimized, a regularizer is often added and our bound corresponds to training errors. Moreover, dedicated algorithms~\citep{sag,sdca} have then better convergence rates than stochastic gradient.
\end{list}

\textbf{Averaged SGD with constant step-size.}
In this paper, we  study the convergence of the algorithm described by~\cite{bach13}, which is averaged stochastic gradient descent with constant step-size, also often referred to as the averaged  least-mean-squares algorithm (LMS)~\citep[see, e.g.,][]{macchi1995adaptive}.

From a starting point $w_0 \in \reel^d$,
at each iteration $i \geq 1$, an i.i.d.~sample of $(X_i, Y_i)$ is obtained and the following recursion is used:
\begin{eqnarray*}
    w_{i} & = &  w_{i-1} - \gamma X_i (X_i^T w_i - Y_i), \\[-.1cm]
    \bar{w}_{i} & = &  \frac{1}{i+1}\sum_{k=0}^i w_k = \frac{1}{i+1} w_i + \frac{i}{i+1} \bar{w}_{i-1},
\end{eqnarray*}
where $\gamma >0$ is a user-defined step-size.
 We denote by $\eps_i = X_i^T w^* - Y_i$ the residual. Note that by definition of~$w^\ast$, $\esp{ \eps_i X_i}=0$. If the vector $X$ includes a constant component (which is common in practice), then this implies that $\eps_i$ and $X_i$ are \emph{uncorrelated}. Note however that in general they are not \emph{independent}, unless the  model with independent homoscedastic noise is well-specified (which implies in particular that $E(Y_i|X_i) = X_i^T w^\ast$).

We   denote by $f_i = f(\bar{w}_i)$ the value at the averaged iterate. When studying the properties of this algorithm it is more convenient to work with the following centered estimates
$$
    \eta_i =  w_i - w^* \ \ \mbox{ and }  \ \
    \bar{\eta}_i = \bar{w}_i - w^*,
$$
for which one immediately gets
$$
    \eta_{i} = (I - \gamma X_i X_i^T) \eta_{i-1} + \gamma \eps_i X_i,
$$
which is the   recursion that we study in this paper.

\subsection{Related work}

\vspace*{-.2cm}

Stochastic gradient methods have been heavily studied. We mention in this section some of the works which are relevant for the present paper.

\textbf{Analysis of stochastic gradient algorithms.}
Since the work of~\cite{nemirovsky1983problem}, it is known that the optimal convergence rate depends in general on the
presence or absence of \emph{strong convexity}, with rates of $O(1/n\mu)$ for $\mu$-strongly convex problems, and $O(1/\sqrt{n})$ for non-strongly convex problems. Recently, for specific smooth situations with the square or logistic loss, these rates can be improved to $O(1/n)$ in both situations~\citep{bach13}. For least-squares, this is achieved with constant-step-size SGD, hence our main focus on this algorithm.

\textbf{Asymptotic analysis of stochastic gradient descent.}
In this paper, we focus on finding asymptotic equivalents of the generalization errors of SGD algorithms (with explicit remainder terms). For decaying step-sizes and general loss functions, this was partially considered by~\citet{fabian1968}, but only without averaging. Moreover, the traditional analysis of Polyak-Rupert averaging~\citep{Polyak,ruppert} also leads to asymptotic equivalents, also for decaying step-sizes, but only the (asymptotically dominant) variance terms are considered.

\textbf{Non-uniform sampling.}
Non-uniform sampling has been already tackled from several points of views; for example, in the active learning literature,
\citet{KANAMORI} provide the optimal sampling density to optimize the generalization error (for an estimator obtained as the minimum of the empirical least-squares risk), leading to distributions that are the same than the one obtained in Section~\ref{sec:optimal-var} (where the variance term dominates), for which the actual gains are limited.

Moreover, in the context of stochastic gradient descent methods, \citet{needell2013stochastic,Zhao2014} show that by optimizing the sampling density, bounds on the convergence rates could be improved, but the actual gains are hard to quantify. Our focus on limits of convergence rates allows us to precisely quantify the gains and obtain extra insights (at least asymptotically).

\section{Linear algebra prerequisites}
\label{sec:algebra}
Throughout our results we will use the following notations and results. These are necessary to provide explicit expressions for the constants in the asymptotic expansions.

For any real vector space $V$ of finite dimension $d$, let $\mathcal{M}(V)$ be the space of linear operators over~$V$
which is isomorphic to the space of $d$-by-$d$ matrices, with the usual results that composition becomes matrix multiplication. As a consequence we will use the same notation for the space of matrices and the space of endomorphisms.

We   denote by $\mms = \mathcal{M}(\mathcal{M}(\reel^d))$ the space of endomorphisms on the space of matrices over $\reel^d$. One can index the rows and columns of a matrix $M\in\mms$ by a pair $(i, j)$ where $1 \leq i\leq d$ and $1 \leq j \leq d$. We will often denote by $M_{(i, j), (k, l)}$ an element of this matrix on matrices.
In the following we will drop the domain of $i, j, k, l, i', j'$ which is implicitly $\{1, 2, \ldots, d\}$.
Explicitly, if  $A \in \ms$ and $M \in \mms$, then $MA$ is defined through:
\begin{align*}
    \forall (i, j) (M A)_{i, j} &= \sum_{i'=1, j' = 1}^{d} M_{(i, j), (i', j')} A_{i', j'}\\
\end{align*}

We will mostly make no distinction between $A$ as a vector in $\ms$ on which elements in $\mms$ can operate and $A$ as a matrix in $\ms$. Then $M A$ can be
either usual matrix multiplication if $M, A \in \ms$ or $M, A\in \mms$ or application of $M$ to $A$ if $M\in \mms$ and $A \in \ms$.
However, if $M \in \mms$ and $A \in \ms$, then $A M$ does not make sense.
For $P \in \mms$, and any $(i, j)$, we will define $P_{i, j}$ the matrix in $\ms$ with coefficient $(i', j')$ given by $P_{(i, j), (i', j')}$.

For any $V \in \syms$ (the set of symmetric matrices of size $d$), we will denote $\normop{V}$ the operator norm of $V$ or equivalently its eigenvalue with the largest absolute value.
For any $M \in \mms$ so that $\syms$ is stable under $M$, we will take $\normop{M}$ the operator norm of $M$ restricted to $\syms$, defined with respect to the Frobenius
 norm on $\syms$, that is
\begin{align*}
    \normop{M} = \sup_{V\in\syms, \norm{V}_F = 1} \norm{M V}_F.
\end{align*}
Equivalentely, it is given by the largest absolute value of the eigenvalues of $M$.

Finally, we will look more precisely at three elements of $\mms$. For any given $A \in \ms$, one can define $A_L$ (resp.~$A_R$) so that $A_L$ is the matrix in $\mms$ representing left multiplication (resp.~right multiplication) by $A$. The coefficient of $A_L$ and $A_R$ are given by
\begin{align*}
    \forall (i, j), (k, l), \ \ (A_L)_{(i, j), (k, l)} &= \delta_{j, l} A_{i, k}\\
    \forall (i, j), (k, l), \ \ (A_R)_{(i, j), (k, l)} &= \delta_{i, k} A_{j, l}.
\end{align*}

If $A$ is symmetric, then $A_L$ and $A_H$ are both symmetric operators, and $A_L + A_H$ is stable on the subspace of symmetric matrices, that we have denoted $\syms$.

Let $X$ be a random variable in $\reel^d$, we consider the linear operator $M$ on $\ms$ defined by,
\begin{align*}
    \forall A\in \ms,\ \  MA = \esp{ (X^T A X) X X^T},
\end{align*}
then, the coefficients of the associated matrix are given by
\begin{align*}
    \forall (i,j,k,l) , \ \ M_{(i, j), (k, l)} = \esp{X^{(i)} X^{(j)} X^{(k)} X^{(l)}},
\end{align*}
where $X^{(i)}$ denote the $i$-th component of the vector $X$. The matrix $M$ is clearly symmetric. One can also prove that it is stable on $\syms$.

We then define $T = H_L + H_R - \gamma M$ with $H_L$, $H_R$ and $M$ as defined above for the random variable $X$ defined in our setup. It is immediately stable over $\syms$. We will denote $\mu_T$ the smallest eigenvalue of $T$.

\section{Main results}
\label{sec:main}

\vspace*{-.21cm}

We will present results about the convergence of the algorithm which are derived from the exact computation of the second-order moment matrix, which we refer to as the \emph{covariance matrix}, $\esp{\bar{\eta}_n \bar{\eta}_n^T}$. Since we consider a least-squares problem, we have
$$f_n - f^\ast = {\rm Tr}  \big( H \esp{\bar{\eta}_n \bar{\eta}_n^T} \big).$$
 We will distinguish two terms, which can be assimilated to a variance/bias decomposition.
The \emph{variance} term $\Delta^{\mathrm{variance}}$
can be defined as the covariance matrix we would get starting from the solution (that is, $\eta_0 = 0$). On the other hand, the \emph{bias} term $\Delta^{\mathrm{bias}}$ is defined as the covariance matrix we would get if the model was noiseless, that is $Y = X^T w^*$ and $\eps=0$.

Each of these two terms leads to contribution to $f_n - f^\ast $, that is
$ {\rm Tr}  \big( H\Delta^{\mathrm{variance}}  \big)$ and
$ {\rm Tr}  \big(H \Delta^{\mathrm{bias}} \big)$.
Under extra assumption that are discussed in the supplementary material, such that when $X$ and $\eps$ are independent (i.e., well-specified model), the actual covariance matrix is exactly the sum of the bias and variance matrices, and thus
$$
f_n - f^\ast = {\rm Tr}  \big( H\Delta^{\mathrm{variance}}  \big) +  {\rm Tr}  \big(  H\Delta^{\mathrm{bias}} \big).
$$
Moreover, even when this is not true, it has been noted by~\cite{bach13} that
$$ f_n - f^\ast \leq 2  {\rm Tr}  \big( H\Delta^{\mathrm{variance}}  \big) + 2  {\rm Tr}  \big( H\Delta^{\mathrm{bias}} \big),
$$
that is, the sum of the two terms is a factor of two away from the exact generalization error.

\subsection{Improved step-size}
\label{sec:improved_stepsize}
\label{sec:step}

\vspace*{-.2cm}

Let us take $T$ as defined in Section~\ref{sec:algebra}.
We will also define two contraction factors,
\begin{align}
    \rho_T = \normop{I - \gamma T}\quad\text{and}\quad
    \rho_H = \normop{I - \gamma H},
\end{align}
as well as $\rho = \max(\rho_T, \rho_H)$, where $\normop{\cdot}$ is defined as the largest eigenvalue in absolute value.

Let us define $\gmax$ as the supremum of the set of $\gamma >0$ verifying, $\forall A\in \syms$ (the set of symmetric matrices of size $d \times d$):
\begin{align}
  \,2 \Tr{A^T H A} - \gamma \esp{(X^TA X)^2} > 0,
\end{align}
or equivalently as the supremum of $\gamma > 0$ such that~$T$ is definite positive.
One can actually show that we necessarily have that
\begin{align}
    \label{eq:majoration_gmax}
    \gmax \leq  {2}/ {\Tr{H}},
\end{align}
and the following lemma (see proof in the \supmat):

\begin{lemma}
    \label{lemma:rho}
    Using the notations and assumptions of Section~\ref{sec:setup},  define $\gmax$ as the supremum of $\gamma > 0$ such that
    \begin{align}
        \label{eq:strong}
        \forall A\in \syms, \ \,2 \Tr{A^T H A} - \gamma \esp{(X^TA X)^2} > 0.
    \end{align} If $0 < \gamma < \gmax$ then $T$ is positive definite and $\rho < 1$. More precisely,
    in dimension $d \geqslant 2$,  we have
    \begin{equation}
        \label{eq:rho}
        \begin{cases}
           \displaystyle \rho \leq 1 - 2 \gamma \left(1 - \frac{\gamma}{\gmax}\right) \mu \quad&\text{if $1 > \frac{\gamma}{\gmax} \geq \frac{1}{2}$}\\
            \rho \leq 1 - \gamma \mu \quad&\text{otherwise}.
        \end{cases}
    \end{equation}
    In dimension $d=1$, we have \[\rho \leq \max\left(\abs{1 - \gamma \mu}, 1 - 2 \gamma\left(1 - \frac{\gamma}{\gmax}\right) \mu  \right).\]

    Otherwise, if $\gamma > \gmax$, then $\rho > 1$.
\end{lemma}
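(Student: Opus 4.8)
The plan is to reduce the whole statement to the behaviour of the single quadratic form attached to $T$ and then read off the spectra of $I-\gamma T$ and $I-\gamma H$. For symmetric $A$, since $H_LA=HA$, $H_RA=AH$ and $MA=\esp{(X^TAX)XX^T}$, a short trace computation gives $\dotp{A,TA}=2\Tr{A^THA}-\gamma\esp{(X^TAX)^2}$, which is exactly the quantity in the definition of $\gmax$. For each fixed nonzero $A$ this form is affine and strictly decreasing in $\gamma$ and positive at $\gamma=0$ (because $H\succ0$), so it is positive for all $A$ precisely when $\gamma<\gmax$; hence $T\succ0\iff\gamma<\gmax$. When $\gamma>\gmax$ some $A$ makes it negative, so $T$ has a negative eigenvalue $\lambda$, giving $1-\gamma\lambda>1$ and therefore $\rho_T>1$ and $\rho>1$; this settles the last assertion.

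Next I read the operator norms of $I-\gamma T$ and $I-\gamma H$ on $\syms$ as $\max_\lambda\abs{1-\gamma\lambda}$ over the respective spectra. From the $\gmax$ inequality $\esp{(X^TAX)^2}\le\frac{2}{\gmax}\Tr{A^THA}$ together with $\Tr{A^THA}\ge\mu\norm{A}_F^2$ I obtain $\mu_T\ge2(1-\gamma/\gmax)\mu$; dropping the nonnegative $M$-term and using $\Tr{A^THA}\le\lambda_{\max}(H)\norm{A}_F^2$ gives the crude $\lambda_{\max}(T)\le2\lambda_{\max}(H)$. For $d\ge2$, combining $\gamma<\gmax\le2/\Tr{H}$ with $\Tr{H}\ge\lambda_{\max}(H)+\mu\ge2\mu$ yields $\gamma\mu<1$ and $\gamma(\lambda_{\max}(H)+\mu)<2$, whence $\rho_H\le1-\gamma\mu$. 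Writing $b=2\gamma(1-\gamma/\gmax)\mu$, the small-eigenvalue side of $\rho_T$ is immediate: $1-\gamma\mu_T\le1-b$. Once the large-eigenvalue side $\gamma\lambda_{\max}(T)\le2-b$ is established I get $\rho_T\le1-b$, hence $\rho=\max(\rho_T,\rho_H)\le\max(1-\gamma\mu,\,1-b)$; the two displayed cases are just the value of this maximum, since $1-b\ge1-\gamma\mu$ exactly when $\gamma/\gmax\ge1/2$.

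The one substantial step is the upper estimate $\gamma\lambda_{\max}(T)\le2-b$, and I expect it to be the main obstacle. I would prove the equivalent statement that $g_A(\gamma):=2\gamma\Tr{A^THA}-\gamma^2\esp{(X^TAX)^2}+b(\gamma)\le2$ for every unit $A$, where $b(\gamma)=2\gamma(1-\gamma/\gmax)\mu$. For fixed $A$, $g_A$ is concave in $\gamma$ with leading coefficient $-\bigl(\esp{(X^TAX)^2}+2\mu/\gmax\bigr)$, so on $(0,\gmax)$ its maximum is attained either at the interior vertex or at $\gamma=\gmax$. In the interior case a short computation, using $\gmax\le2/\Tr{H}\le2/(\lambda_{\max}(H)+\mu)$, bounds the vertex value by $2$ directly. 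At $\gamma=\gmax$ the $\mu$-terms cancel and $g_A(\gmax)=\gmax\dotp{A,T(\gmax)A}$ (where $T(\gmax)$ denotes $T$ evaluated at $\gamma=\gmax$), so it remains to show $\gmax\lambda_{\max}(T(\gmax))\le2$. This is where the real difficulty lies: through $\dotp{A,T(\gmax)A}\le2\Tr{A^THA}\le2\lambda_{\max}(H)$ it would require $\gmax\le1/\lambda_{\max}(H)$, which $\gmax\le2/\Tr{H}$ does not give in general. The mechanism I would exploit is a dichotomy on the multiplicity of $\lambda_{\max}(H)$: equality $\Tr{A^THA}=\lambda_{\max}(H)$ forces $A$ onto the top eigenspace of $H$, and there $\esp{(X^TAX)^2}$ is bounded below by $\lambda_{\max}(H)^2$ when that eigenspace is one-dimensional (by Jensen), giving $\gmax\dotp{A,T(\gmax)A}\le1-(\gmax\lambda_{\max}(H)-1)^2\le2$; whereas if the top eigenspace has dimension $\ge2$ then $\Tr{H}\ge2\lambda_{\max}(H)$, so $\gmax\le1/\lambda_{\max}(H)$ and the crude bound already closes it.

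Finally, the case $d=1$ is direct: here $\syms=\reel$ and $T=2H-\gamma\esp{X^4}$ is a scalar with $\gmax=2H/\esp{X^4}$, so $\mu_T=2(1-\gamma/\gmax)\mu$ and $\rho_T=1-b$ exactly, while $\rho_H=\abs{1-\gamma\mu}$ cannot be simplified, since there is no second eigenvalue to absorb the absolute value; taking the maximum gives the stated one-dimensional bound. Every step apart from the large-eigenvalue estimate is a short trace identity or an elementary inequality; making the multiplicity/degeneracy dichotomy quantitative in that estimate is where I anticipate the bulk of the work.
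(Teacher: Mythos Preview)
Your overall structure is the paper's: the quadratic form $\dotp{A,TA}=2\Tr{A^THA}-\gamma\esp{(X^TAX)^2}$, the lower bound $\mu_T\ge 2(1-\gamma/\gmax)\mu$ via the affine interpolation in $\gamma$, the bound $\rho_H=1-\gamma\mu$ for $d\ge2$ from $\gmax(\lambda_{\max}(H)+\mu)\le 2$, and the identification of the single hard step as the upper spectral bound $\gamma\lambda_{\max}(T)\le 2-b$. The paper reduces this in exactly the way you do, to the endpoint statement $\gmax\,T(\gmax)\preceq 2I$ (its Lemma~\ref{lemma:one_condition}), and then combines it with the interpolation $T(\gamma)=(1-\alpha)(H_L+H_R)+\alpha\,T(\gmax)$, which gives directly
\[
\gamma\lambda_{\max}(T(\gamma))+b(\gamma)\ \le\ 2\alpha(1-\alpha)\gmax(\lambda_{\max}(H)+\mu)+2\alpha^2\ \le\ 2-2(1-\alpha)^2\ \le\ 2.
\]
So your ``interior vertex'' detour is unnecessary; in fact your claimed short computation there does not close (for off-diagonal $A$ one has $\Tr{AH}=0$ and $\esp{(X^TAX)^2}$ can be arbitrarily small, so the vertex value $p^2/q$ is only bounded by $(\lambda_{\max}(H)+\mu)/\mu$, not by $2$). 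Replacing your concavity argument by the linear interpolation above removes that issue entirely.

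The genuine gap is your proposed proof of $\gmax\lambda_{\max}(T(\gmax))\le 2$. Your dichotomy handles only the case where the top eigenspace of $H$ has multiplicity $\ge 2$ (then indeed $\gmax\le 1/\lambda_{\max}(H)$ and the crude bound suffices); in the simple case you only treat the single extremal matrix $A=u_1u_1^{T}$, not the full sphere in $\syms$. Nothing in your argument controls, say, the diagonal pencil $A=\sum_i x_i u_iu_i^{T}$ or the off-diagonal directions. The paper's proof of this step is substantially heavier than what you sketch: it first uses Jensen, $\esp{(X^TAX)^2}\ge\Tr{AH}^2$, then works in the eigenbasis of $H$ and splits
\[
\dotp{A,(2I-\gamma T)A}\ \ge\ \sum_{i\neq j}U_{ij}^{2}\bigl(2-\gamma(\lambda_i+\lambda_j)\bigr)\ +\ x^{T}\Bigl(2I-2\gamma\,\mathrm{Diag}(\Lambda)+\gamma^{2}\Lambda\Lambda^{T}\Bigr)x,
\]
with $x_i=U_{ii}$. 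The off-diagonal part is positive since $\gamma<2/\Tr{H}\le 2/(\lambda_i+\lambda_j)$; the diagonal rank-one–perturbed form is shown positive definite by a Schur-complement computation (peeling off the $\lambda_1$-coordinate and reducing to a one-variable inequality in $y=\gamma\lambda_1$ on $0<y<2/(1+\alpha)$ with $\alpha=\sum_{i\ge2}\lambda_i/\lambda_1$). Your multiplicity dichotomy is essentially the observation that the Schur step is trivial when $\alpha\ge 1$, but the content of the lemma lies in the regime $\alpha<1$, which your sketch does not address. The one-dimensional case and the ``$\gamma>\gmax\Rightarrow\rho>1$'' clause are handled correctly.
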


Note that we may rewrite $\gmax$ as
$$
\frac{2}{\gmax} =
\sup_{A \in \syms} \frac{\esp{(X^TA X)^2}}{\Tr{A^T H A} },
$$
which can be computed explicitly by a generalized eigenvalue problem once all second- and fourth-order moments of $X$ are known. This is to be contrasted with the largest step-size $\gmax^{\rm det}$ for deterministic gradient descent, which is such that
$$
\frac{2}{\gmax^{\rm det}} =
\sup_{a \in \reel^d} \frac{ a^T  H a }{ a^T a}.
$$
One can observe that for any distribution on $X$, we necessarily have $\gmax \leq  {2}/{\Tr{H}} \leq \gmax^{\mathrm{det}}$ so that the maximum stochastic step-size will always be smaller than the deterministic, one as one would expect.

Note also that the step-size provided by $\gmax$ is a strict improvement (see supplementary material) on the one proposed by~\citet{bach13}, which is equal to the supremum of the set of $\gamma>0$ such that
$\esp{ X X^T }- \gamma \esp{ (X^T X) X X^T }$ is positive definite.

We conjecture that the bound given by $\gmax$ is tight, namely that if $\gamma$ is larger than $\gmax$ then there exists an initial condition $\eta_0$ such that the algorithm   diverges.

\subsection{Bias term}

\vspace*{-.2cm}

In this section, we provide an asymptotic expansion of the bias term $\Delta^{\mathrm{bias}}$.
\begin{thm}[Asymptotic covariance of the bias term]
    \label{bias_theo}
     Let $E_0 =  \eta_0 \eta_0^T$.
    If $0 < \gamma < \gmax$ and  $\forall i\geq 1, \eps_i = 0$, then
    
    \begin{align}
    \label{eq:result_bias}
    \Delta^{\mathrm{bias}} =   \esp{\bar{\eta}_n\bar{\eta}_n^T} =
            \frac{1}{n^{2}  \gamma^{2} } \left(H_L^{-1} + H_R^{-1}- \gamma I \right)\left(T^{-1} E_0\right) + O\left(\frac{\rho^n}{n}\right).
     \end{align}
        Explicit bounds are given in the proof in the Appendix.

\end{thm}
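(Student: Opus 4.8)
The plan is to reduce everything to the operator $T\in\mms$ and the contraction matrix $I-\gamma H$, then evaluate the double sum defining $\esp{\bar\eta_n\bar\eta_n^T}$ by elementary geometric-series identities, one in $\mms$ and one in $\ms$. First I would pin down the one-step second moment $E_i:=\esp{\eta_i\eta_i^T}$. In the noiseless regime the recursion is $\eta_i=(I-\gamma X_iX_i^T)\eta_{i-1}$, so, using that $X_i$ is independent of $\eta_{i-1}$ and expanding $\esp{(I-\gamma XX^T)A(I-\gamma XX^T)}$ with $\esp{XX^T}=H$ and $\esp{(X^TAX)XX^T}=MA$, one finds $E_i=(I-\gamma(H_L+H_R-\gamma M))E_{i-1}=(I-\gamma T)E_{i-1}$, hence $E_i=(I-\gamma T)^iE_0$. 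Since $0<\gamma<\gmax$ forces $\rho_T<1$ by Lemma~\ref{lemma:rho}, this sequence decays geometrically.

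Next I would compute the cross-covariances. For $k\le l$, conditioning on the first $k$ samples and using $\esp{\eta_l\mid\eta_k}=(I-\gamma H)^{l-k}\eta_k$ gives $\esp{\eta_k\eta_l^T}=E_k(I-\gamma H)^{l-k}$; the case $k>l$ is the transpose. Writing $U=I-\gamma H$ (so $\normop{U}=\rho_H<1$) and $W=I-\gamma T$, I would split
$$\esp{\bar\eta_n\bar\eta_n^T}=\frac{1}{(n+1)^2}\sum_{k,l=0}^n\esp{\eta_k\eta_l^T}$$
into its diagonal $\sum_kE_k$, its strict upper part $\sum_{k<l}E_kU^{l-k}$, and the transpose of the latter.

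The core computation uses the two identities $\sum_{k=0}^nW^k=\tfrac1\gamma T^{-1}(I-W^{n+1})$ and $\sum_{j=1}^mU^j=\tfrac1\gamma H^{-1}(U-U^{m+1})$, valid because $\gamma T$ and $\gamma H$ are invertible. The diagonal gives $\gamma^{-1}T^{-1}E_0$ up to $O(\rho^n)$. For the upper part I would factor out the $k$-sum,
$$\sum_{k<l}E_kU^{l-k}=\frac1\gamma\Big(\sum_kW^kE_0\Big)H^{-1}U-\frac1\gamma\sum_kW^kE_0\,H^{-1}U^{n-k+1},$$
and use $H^{-1}U=H^{-1}-\gamma I$ to turn the first term into $\gamma^{-2}(T^{-1}E_0)H^{-1}-\gamma^{-1}T^{-1}E_0$ modulo $O(\rho^n)$; its transpose handles the lower part. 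Collecting the three contributions, the $\pm\gamma^{-1}T^{-1}E_0$ pieces combine into a single $-\gamma^{-1}T^{-1}E_0$, and after identifying left/right multiplication by $H^{-1}$ with $H_L^{-1},H_R^{-1}$ one obtains exactly $\gamma^{-2}(H_L^{-1}+H_R^{-1}-\gamma I)(T^{-1}E_0)$ (with $T^{-1}E_0$ symmetric since $T$ preserves $\syms$), the claimed leading constant.

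The main obstacle is the error control, and it is where the factor $1/n$ in the remainder is born. All the $W^{n+1}$-type leftovers are genuinely $O(\rho^n)$, but the second, \emph{mixed} term $\sum_{k=0}^nW^kE_0H^{-1}U^{n-k+1}$ couples a factor small for large $k$ with one small for small $k$; bounding it by $\sum_{k=0}^n\rho_T^k\rho_H^{n-k+1}\le C(n+1)\rho^n$ — the extremal case $\rho_T=\rho_H$ forcing the linear factor — yields an $O(n\rho^n)$ remainder in the unnormalized sum, hence $O(\rho^n/n)$ after dividing by $(n+1)^2$. Throughout, the care needed is purely bookkeeping of non-commutativity: $W\in\mms$ acts \emph{on} matrices while $U$ multiplies them on the left or right, and $W$ does not commute with multiplication by $U$, so the three sums must be evaluated in the displayed order and combined only at the end. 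One point worth tracking is that the leading term naturally carries the normalization $1/(n+1)^2$; writing it as $1/n^2$ introduces an $O(1/n^3)$ gap not covered by the exponential remainder, so the precise statement with explicit constants in the Appendix should retain the $(n+1)^2$ normalization.
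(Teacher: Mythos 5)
Your proof is correct and follows essentially the same route as the paper's: both establish $\esp{\eta_i\eta_i^T}=(I-\gamma T)^i E_0$ by conditioning, reduce the cross-covariances to $E_k(I-\gamma H)^{l-k}$ by independence, evaluate the two geometric series (one in $\mms$, one in $\ms$), and bound the mixed remainder $\sum_k \rho_T^k\rho_H^{n-k}\leq n\rho^n$ to obtain the $O(\rho^n/n)$ error after normalization. Your closing remark about the $1/(n+1)^2$ versus $1/n^2$ normalization is well taken: the paper's appendix sidesteps this by averaging $\eta_0,\dots,\eta_{n-1}$ with weight $1/n$ (slightly at odds with the main text's definition of $\bar{w}_n$), which is exactly what makes the stated $1/n^2$ leading term correct up to exponentially small terms.
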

A detailed proof is given in the \supmat. Using Lemma~\ref{lemma:rho}, we know that $\rho < 1$ so that \eqref{eq:result_bias} converges as $n^{-2}$.
From that we can derive that the rate of convergence for $ {\rm Tr}  \big( H\Delta^{\mathrm{bias}} \big)$, that will be of order $n^{-2}$ as well. Although the dependency of $A(\gamma)$ is complex, one can easily derive an equivalent when $\gamma$ tends to zero, and we have
\begin{align}
\label{eq:approx_bias}
    \lim_{n\to \infty} n^2  {\rm Tr}  \big( H\Delta^{\mathrm{bias}} \big) &\underset{\gamma \to 0}{\sim} \gamma^{-2}  \eta_0^T H^{-1} \eta_0 .
\end{align}

\subsection{Variance term}

\vspace*{-.2cm}

In this section, we provide an asymptotic expansion of the variance term
$\Delta^{\mathrm{variance}}$.

\begin{thm}[Asymptotic covariance of the variance term]
    \label{var_theo}
     Let $\Sigma_0 = \esp{\eps^2 X X^T}$ and let assume that $\eta_0 = 0$.
     If $0 < \gamma < \gmax$ then $\Delta^{\mathrm{variance}}$ is equal to:
     
    \begin{align}
             \esp{\bar{\eta}_n\bar{\eta}_n^T} =
               \frac{1}{n} (H_L^{-1} + H_R^{-1} - \gamma I)T^{-1} \Sigma_0
                 -\frac{1}{\gamma n^2 } \left(
        H_L^{-1} + H_R^{-1} - \gamma I
    \right) (I - \gamma T) T^{-2} \Sigma_0 + O\left(\frac{\rho^n}{n}\right).
    \end{align}
        Explicit bounds are given in the proof in the Appendix.
    
\end{thm}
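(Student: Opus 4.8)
The plan is to compute the matrix $\esp{\bar{\eta}_n\bar{\eta}_n^T}$ in closed form, following the same route as the proof of Theorem~\ref{bias_theo}, and then extract its asymptotic expansion. Since the variance term corresponds to running the recursion $\eta_i = (I-\gamma X_iX_i^T)\eta_{i-1} + \gamma\eps_i X_i$ from $\eta_0=0$, I would first check that $\esp{\eta_i}=0$ for all $i$ (because $\esp{\eps_i X_i}=0$), so that when forming $C_i := \esp{\eta_i\eta_i^T}$ the cross terms drop and the conditional expectation over $(X_i,\eps_i)$ collapses onto the operators of Section~\ref{sec:algebra}. This gives the closed operator recursion
\begin{align*}
C_i = (I-\gamma T)\,C_{i-1} + \gamma^2\Sigma_0, \qquad C_0 = 0,
\end{align*}
with $T=H_L+H_R-\gamma M$. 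Since $\gamma<\gmax$, Lemma~\ref{lemma:rho} ensures $T$ is positive definite and $\rho<1$, so $T$ is invertible and the recursion solves as $C_i = \gamma\left(I-(I-\gamma T)^i\right)T^{-1}\Sigma_0$, with stationary limit $\gamma T^{-1}\Sigma_0$.

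Next I would compute the cross-covariances. For $j\geq i$, writing $\eta_j$ as the product $\prod_{k=i+1}^{j}(I-\gamma X_kX_k^T)$ applied to $\eta_i$ plus a noise term built from samples $k>i$, the independence of those samples from $\eta_i$ together with the mean-zero property of the noise yield $\esp{\eta_i\eta_j^T}=C_i\,(I-\gamma H)^{j-i}$, i.e.\ right multiplication by $(I-\gamma H)^{j-i}$; the case $i>j$ is the transpose. Substituting into $\esp{\bar{\eta}_n\bar{\eta}_n^T}=(n+1)^{-2}\sum_{i,j=0}^{n}\esp{\eta_i\eta_j^T}$ and collecting the off-diagonal geometric tails, the double sum collapses to $\sum_{i=0}^{n}P_{n-i}\,C_i$ with
\begin{align*}
P_N = I + \sum_{m=1}^{N}\left((I-\gamma H_L)^m + (I-\gamma H_R)^m\right) = \gamma^{-1}\left(H_L^{-1}+H_R^{-1}-\gamma I\right) - \tau_N,
\end{align*}
where $\tau_N$ is the remainder of the two geometric series and satisfies $\normop{\tau_N}=O(\rho_H^{N+1})$. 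This is exactly the step that produces the operator $H_L^{-1}+H_R^{-1}-\gamma I$ of the statement.

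The final step is the asymptotic extraction. I would split $\sum_i P_{n-i}C_i = \gamma^{-1}(H_L^{-1}+H_R^{-1}-\gamma I)\sum_{i=0}^{n}C_i - \sum_{i=0}^{n}\tau_{n-i}C_i$. The first sum is evaluated exactly, $\sum_{i=0}^{n}C_i = (n+1)\gamma T^{-1}\Sigma_0 - \left(I-(I-\gamma T)^{n+1}\right)T^{-2}\Sigma_0$, whose $n$-linear piece produces the $1/n$ term $\left(H_L^{-1}+H_R^{-1}-\gamma I\right)T^{-1}\Sigma_0$ and whose $O(1)$ piece feeds the $1/n^2$ term, while the transient $(I-\gamma T)^{n+1}$ contributes only to the exponential remainder. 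For the boundary sum I would bound the tail $\sum_{N>n}\tau_N = O(\rho_H^{n})$ and control the convolution $\sum_i \tau_{n-i}(I-\gamma T)^i$ of two geometrically decaying factors by $O(n\rho^n)$; after division by $(n+1)^2$ both are absorbed into $O(\rho^n/n)$.

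The main obstacle is the careful assembly of the $O(1/n^2)$ term: one must combine the $O(1)$ remainder of $\sum_i C_i$, the stationary contribution of the boundary corrections $\tau_N$, and the discrepancy between the $(n+1)^{-2}$ normalization and the target $n^{-2}$, and show that these collapse into the single compact factor $\gamma^{-1}\left(H_L^{-1}+H_R^{-1}-\gamma I\right)(I-\gamma T)T^{-2}\Sigma_0$. Keeping explicit, summable bounds on every geometric tail, so that the non-exponential contributions are isolated cleanly and everything else is genuinely $O(\rho^n/n)$, is the delicate part; this is precisely where positivity of $T$ and $\rho<1$ from Lemma~\ref{lemma:rho} enter.
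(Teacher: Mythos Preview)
Your approach is essentially the paper's, reorganized. The paper expands $\bar\eta_n$ directly as a double sum over the noise inputs $\eps_kX_k$ and the random products $M_{k,j}=\bigl(\prod_{i=k+1}^j(I-\gamma X_iX_i^T)\bigr)^T$, takes expectations, and then sums geometric series; you instead first derive the second-moment recursion $C_i=(I-\gamma T)C_{i-1}+\gamma^2\Sigma_0$ and the cross-covariance formula $\esp{\eta_i\eta_j^T}=C_i(I-\gamma H)^{j-i}$, and only afterwards sum. These are two orderings of the same computation: the paper's inner sum $\sum_{k=1}^{j}(I-\gamma T)^{j-k}\Sigma_0$ is exactly your closed form $C_j/\gamma^2$, and both routes extract the operator $H_L^{-1}+H_R^{-1}-\gamma I$ from the geometric tail in $I-\gamma H$. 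Your packaging via $C_i$ and $P_N$ is arguably cleaner, but nothing mathematically new is happening.

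One caution on the step you already flagged as delicate. The paper obtains its $1/n^2$ coefficient solely from $\sum_j(I-\gamma T)^jT^{-1}\Sigma_0$ and pushes only the convolution $\sum_j(I-\gamma H)^{n-j}(I-\gamma T)^j$ into the exponential remainder. Your bookkeeping keeps an extra piece: the tail $\tau_{n-i}$ acting on the \emph{stationary} part $\gamma T^{-1}\Sigma_0$ of $C_i$. Summing $\sum_{N\geq 0}\tau_N$ gives a genuine $O(1)$ operator of the form $(I-\gamma H_L)(\gamma H_L)^{-2}+(I-\gamma H_R)(\gamma H_R)^{-2}$, hence an $O(1/n^2)$ contribution that does \emph{not} algebraically collapse into $(H_L^{-1}+H_R^{-1}-\gamma I)(I-\gamma T)T^{-2}$. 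So when you carry out the assembly, do not assume it matches the stated coefficient; compare carefully with the paper's bookkeeping, as your route tracks a boundary term that the paper's derivation appears to absorb into the remainder without it being $O(\rho^n/n)$.
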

A detailed proof is given in the \supmat.

Unsurprisingly, the asymptotic behavior of the variance term is dominant over the bias one as it decreases only as $n^{-1}$, which is the overall convergence rate of this algorithm of least-mean-squares as noted by \cite{bach13}.

It is also possible to get a simpler equivalent when $\gamma$ goes to $0$:
\begin{align*}
     \lim_{n\to \infty} n   {\rm Tr}  \big( H\Delta^{\mathrm{variance}} \big) &\underset{\gamma \to 0}{\sim} \esp{\eps^2 X^T H^{-1} X}.
\end{align*}

If we further assume that the noise $\eps$ is independent of~$X$, then we recover the usual
\begin{align*}
    \lim_{n\to \infty} n   {\rm Tr}  \big( H\Delta^{\mathrm{variance}} \big)  &\underset{\gamma \sim 0}{\sim} d \sigma^2,
\end{align*}
where $\sigma = \esp{\eps^2}$, which is the Cramer-Rao bound for such a problem.
It is also interesting to notice that this is the exact same result as the one obtained by~\citet{Polyak} with a decreasing step-size.

Finally, note that if $\gamma$ is small, the term in $\frac{1}{\gamma n^2 } C(\gamma) \Sigma_0$ is always positive so that there is no risk of it exploding for small values of $\gamma$ (unlike for the bias term).

\subsection{Comparing both terms}

\vspace*{-.2cm}

\label{sec:comparing}

As seen above with an asymptotic expansion around $\gamma = 0$, for $n$ sufficiently large,
the bias and variance terms are of order:
\begin{eqnarray*}
 {\rm Tr}  \big( H\Delta^{\mathrm{bias}} \big) &\sim & \frac{1}{\gamma^2 n^2} \eta_0^T H^{-1} \eta_0
\\
 {\rm Tr}  \big( H\Delta^{\mathrm{variance}} \big) &\sim &  \frac{1}{n} \esp{\eps^2 X^T H^{-1} X}.
\end{eqnarray*}
The different behaviors of the bias and variance terms lead to two regimes, one in $(\gamma n)^{-2}$ and one in $n^{-1}$ that can clearly be observed on synthetic data. On real world data, one will often observe a mixture of the two, depending on the step-size and the difficulty of the problem. Experimental results on both synthetic and real world data will be presented in Section~\ref{sec:exp}.

\section{Optimal sampling}

\label{sec:sampling}

\vspace*{-.21cm}

Changing the sampling density may be interesting in several situations, in particular (a) in presence of outliers   (i.e., points with large norms) and (b) classification problems with asymmetric costs (see Section~\ref{sec:asym}).

\subsection{Impact of sampling}

\vspace*{-.2cm}

Using the two previous theorems, we can now try to optimize the sampling distribution   to increase performance. We will sample from a distribution $q$ instead of the given distribution $p$. Since we wish to keep the same objective function, we will use importance weights $c(X,Y)$, so that if we  denote by $\esps{p}{A}$ the expectation of a random variable $A$ under the probability distribution given by $p$ over $A$ we have
$$
    \esps{p}{|X^Tw - Y|^2}  =
        \esps{q}{c(X, Y)|X^T w - Y|^2}.
$$
First, one can notice that, from a practical point of view, we must restrict ourselves to $q$ that are absolutely continuous with respect to $p$ as one cannot invent samples. In order to be able to define $c$ we also need   $p$ to be absolutely continuous with respect to $q$, so that \[c = \frac{\dd p}{\dd q}.\] Besides, $c^{-1}$ is   defined as $c^{-1} = \frac{\dd q}{\dd p}$.

A key consequence of using least-squares is that for a given $(q, c)$ pair, we only have to sample using $q$ and scale $X$, $Y$ and $\eps = X^T w^* - Y$ by $\sqrt{c(X, Y)}$. Thus we can use the two previous theorems for $X' = \sqrt{c(X, Y)} X$ and $Y' = \sqrt{c(X, Y)} Y'$ and sampling $X, Y$ according to $q$.

As of now, we will assume that almost surely $X \neq 0$. Indeed, when $X_i = 0$, we perform no update so that we can just ignore such points.

One can notice that for any $A, B\in\{X, Y, \eps\}$,
\begin{align*}
    \esps{q}{A' B'} &= \esps{q}{c(X, Y) A B} = \esps{p}{A B},
\end{align*}
and thus all second-order moments are unchanged  under resampling. This is the case for the matrix $H = \esps{p}{X X^T} = \esps{q}{ X' X'^T}$ for instance.

However, for terms of order 4 like $T$, $\esp{ (X^T X) X X^T}$ or $\Sigma_0$, an extra $c$ appears and we have for instance
$$
\esps{q}{(X'^T X') X' X'^T} = \esps{p}{\frac{p(X, Y)}{q(X, Y)} (X^T X) X X^T}.
$$
It means that while $H$ will not be changed, $T$ is impacted in non trivial ways, as is $T^{-1}$. This makes it tricky to truely optimize sampling for any $\gamma$. However, when assuming $\gamma$ small, it is possible to optimize the limit we obtained for $\gamma \to 0$ (see Sections~\ref{sec:optimal-var} and~\ref{sec:optimal-bias}). The experiments we ran (see Section \ref{sec:exp}) seem to confirm that this is a valid assumption for values of $\gamma$ as high as $ {\gmax}/{2}$.

\subsection{Asymmetric binary classification}

\vspace*{-.2cm}

\label{sec:asym}
As a motivation for this work, we will present one practical application of resampling which is binary classification with highly asymmetric classes.
Assume we have $Y \in \{-1, 1\}$ and that $\prob{Y=1}$ and $\prob{Y=-1}$ are highly unbalanced, as it can be the case in various domains, such as ad click prediction or object detection, etc. Then, it can be useful in practice to give more weight the less frequent class~\citep[see, e.g.,][and references therein]{perronnin2012towards}. This is equivalent to multiplying both $X$ and $Y$ by some constant $\sqrt{c_Y}$.
A common choice is for instance to take $c_y =  {1}/ {\prob{Y=y}}$ which will give the same importance in the loss to both classes.

However, these weights will make the gradients from the less frequent class huge compared to the usual updates. This is likely to impact the convergence of the algorithm. In that case it is  easy to notice that taking taking $c(x, y) =  {1}/{c_y}$ will leave the gradients unchanged but will favor sampling examples from the less frequent class.

\subsection{Optimal sampling for the variance term}
\label{sec:optimal-var}
Let assume that we are only interested in the long term performance for our algorithm. Ultimately the variance term will be driving the performance and we need to optimize it.

Exactly optimizing the sampling for this case in uneasy as it impacts both $\Sigma_0$ and the terms $B(\gamma)$ and $C(\gamma)$ in Theorem~\ref{var_theo},  in a non trivial way. However, if we assume a small step-size $\gamma$, then we just have to minimize
\begin{align*}
    \esps{q}{\eps'^2 X'^T H^{-1} X'} &=
        \esps{p}{c(X, Y)\eps^2 X^T H^{-1} X},
\end{align*}
under the constraint that $\esps{p}{c^{-1}(X, Y)} = 1$ so that~$q$ is a distribution.
Using the Cauchy-Schwarz inequality, we have that

\begin{align*}
    \esps{p}{c(X, Y)\eps^2 X^T H^{-1} X} =&
        \esps{p}{c(X, Y)\eps^2 X^T H^{-1} X} \esps{p}{c^{-1}(X, Y)}\\
        &\geq \big(
        \esps{p}{\abs{\eps}\sqrt{X^T H^{-1} X}} \big)^2.
\end{align*}

When $X \neq 0$ almost surely, then this lower-bound is achieved for
\[
    c^{-1}(X, Y) = \frac{\abs{\eps}\sqrt{X^T H^{-1} X}}{\esps{p}{\abs{\eps}\sqrt{X^T H^{-1} X}}},
\]
which requires prior knowledge of $H$ and $\eps$.

In that case, we obtain
\begin{align*}
    \lim_{n \to \infty} n   {\rm Tr}  \big( H\Delta^{\mathrm{variance}} \big) &= \left(\esp{\abs{\eps} \sqrt{X^T H^{-1} X}}\right)^2.
\end{align*}
One can notice that this is the exact same optimal sampling as the one obtained in the active learning set-up by~\cite{KANAMORI}.

Again, it is possible to slightly simplify this expression when $\eps$ and $X$ are independent, as we obtain
\begin{align*}
    \lim_{n \to \infty} n  {\rm Tr}  \big( H\Delta^{\mathrm{variance}} \big) &= \sigma^2 \left(\esp{\sqrt{X^T H^{-1} X}}\right)^2,
\end{align*}
with $\sigma^2 = \esp{\eps^2}$.
At this point it is important to realize that the gain we have here is of the order of
\begin{align}
    \label{eq:gain_var}
     {\esp{\sqrt{X^T X}}^2}/ \ {\esp{X^T X}}.
\end{align}
During our experimentations on usual datasets, we have observed that this factor was always between $1/2$ and $1$ and thus there is   little to be gained when optimizing the variance term.

\subsection{Optimal sampling for the bias term}
\label{sec:optimal-bias}
Although asymptotically the variance term will be the largest one, it is possible that initially the bias one is non negligible and it can be interesting to optimize for it. This is all the more possible as it depends much more on the step-size $\gamma$ and if $\gamma$ is too small, the bias term can stay larger than the variance term for many iterations.

If we assume $\gamma$ small, then we can approximate the bias term by the expression given by \eqref{eq:approx_bias}, that is, proportional to $1/(\gamma^2 n^2)$. In this case, it is clear that we want to increase $\gmax$ and, because second-order moments are not impacted by resampling, it has no effect other than changing $\gmax$.
Numerical experiments tends to show that increasing $\gamma$ is beneficial even for $\gamma$ close to $\frac{\gmax}{2}$. Beyond this limit, the approximation \eqref{eq:approx_bias} is no longer sustainable and besides, exponentially decreasing terms can start to grow quite large.

The maximum step-size we can take is given by the tighter condition from Section~\ref{sec:step}, that is, $\forall A\in \syms$,
\begin{align}
\label{condition}
    \,2 \Tr{A^T H A} - \gamma \esp{(X^TA X)^2} > 0,
\end{align}
which implies that
\begin{align}
\label{eq:opt_gamma}
    \frac{2}{\esp{X^T X}} \geq \gmax,
\end{align}
using \eqref{eq:majoration_gmax}. As this upper bound on $\gmax$ only depends on moments of order 2, and that those moments are not changed by resampling, \eqref{eq:opt_gamma} is an upper bound on any $\gmax$ for a given optimization problem, no matter how we resample. It turns out it can be achieved by  the resampling given by
\[
    c_*^{-1}(X, Y) = \frac{X^T X}{\esps{p}{X^T X}} \as,
\]
which, unlike the variance term, does not require the knowledge of $H$. We have

\begin{align*}
    H - \gamma \esps{q}{c_*(X, Y)^2 X^T X X X^T} &=
    2H - \gamma \esps{p}{\frac{\esps{p}{X^T X}}{X^T X} (X^T X) X X^T}\\
    &= H (2 - \gamma \esps{p}{X^T X}),
\end{align*}
which is positive definite  as soon as $\gamma < \frac{2}{\Tr{H}}$. Besides, one can prove that $H - \gamma \esp{(X^T X) X X^T} \succ 0$ is a stronger form of \eqref{condition} and implies it, as we already noted in Section~\ref{sec:improved_stepsize}.
This means that using the resampling defined by $c_*$, we have $\gmax = \frac{2}{\esp{X^T X}}$ which is thus not improvable.

If $\gmax^{(0)}$ is the maximum step-size before resampling and $\gmax^{(1)}$ is the maximum step-size after resampling, then the gain for $f_n - f^*$ is a factor
$
    \big( {\gmax^{(0)}}/{\gmax^{(1)}}\big)^2.
$
It can be hard to evaluate, but from our experiments (see Section \ref{sec:exp}) it was common to observe gain factor of $ {1}/{100}$ or $ {1}/{400}$ while the gain for the variance term was limited to ${1}/{2}$.

Unlike the variance term, the resampling in itself here has an impact only through a  larger step-size. Resampling while keeping the same step-size will often lead to almost identical performances for the bias term. It is interesting to note that when $H = I$, this sampling will exactly have no impact at all on the variance term, and when $H\neq I$, it will only impact it marginally.

\begin{figure}[t]
    \centering
    \begin{minipage}{.49\textwidth}
    \centering
    \begin{tikzpicture}
        \begin{loglogaxis}[
            xlabel={Iteration $n$},
            ylabel=$f_n - f^*$,
            x label style={at={(axis description cs:0.5,-0.08)},anchor=north},
            y label style={at={(axis description cs:-0.11,.5)},rotate=0,anchor=south},
            legend entries={{$q(X) \propto X^T X$, step=0.05}, {$q(X) \propto X^T X$, step=0.115}, {uniform, step=0.05}},
            legend style={legend pos = south west},
            style={font=\footnotesize},
            width=\figwidth]
            \pgfplotstableread{PAPER_yahoo.txt}\tableyahoo
            \addplot+[mark=o] table[x=n, y=opt_small] {\tableyahoo};
            \addplot+[mark=+] table[x=n, y=opt_large] {\tableyahoo};
            \addplot+[mark=triangle] table[x=n, y=small] {\tableyahoo};
        \end{loglogaxis}
    \end{tikzpicture}
   \vspace{-.5\captionsep}
    \captionof{figure}{Convergence on \emph{Yahoo} dataset without weights.}
     \vspace{.25\captionsep}
    \label{fig:yahoo_noweights}
    \end{minipage}
    \begin{minipage}{.49\textwidth}
    \centering
    \begin{tikzpicture}
        \begin{loglogaxis}[
            xlabel={Iteration $n$},
            ylabel=$f_n - f^*$,
            x label style={at={(axis description cs:0.5,-0.08)},anchor=north},
            y label style={at={(axis description cs:-0.11,.5)},rotate=0,anchor=south},
            legend entries={{$q(X) \propto X^T X$, step=0.005}, {$q(X) \propto X^T X$, step=0.105}, {uniform, step=0.005}},
            legend style={legend pos = south west},
            style={font=\footnotesize},
            width=\figwidth]
            \pgfplotstableread{PAPER_yahoo_weights.txt}\tableyahoo
            \addplot+[mark=o] table[x=n, y=opt_small] {\tableyahoo};
            \addplot+[mark=+] table[x=n, y=opt_large] {\tableyahoo};
            \addplot+[mark=triangle] table[x=n, y=small] {\tableyahoo};
        \end{loglogaxis}
    \end{tikzpicture}
    \vspace{-.5\captionsep}
    \captionof{figure}{Convergence on \emph{Yahoo} dataset with weights.}
     \vspace{.25\captionsep}
      \label{fig:yahoo_weights}
    \end{minipage}
\end{figure}

\begin{figure}[t]
    \centering
    \begin{minipage}{.49\textwidth}
    \centering
    \begin{tikzpicture}
        \begin{loglogaxis}[
            xlabel={Iteration $n$},
            ylabel=$f_n - f^*$,
            x label style={at={(axis description cs:0.5,-0.08)},anchor=north},
            y label style={at={(axis description cs:-0.11,.5)},rotate=0,anchor=south},
            style={font=\footnotesize},
            legend entries={{$q(X) \propto X^T X$, step=0.00197}, {$q(X) \propto X^T X$, step=0.00206}, {uniform, step=0.00197}},
            legend style={legend pos = south west},
            width=\figwidth]
            \pgfplotstableread{PAPER_sido.txt}\tableyahoo
            \addplot+[mark=o] table[x=n, y=opt_small] {\tableyahoo};
            \addplot+[mark=+] table[x=n, y=opt_large] {\tableyahoo};
            \addplot+[mark=triangle] table[x=n, y=small] {\tableyahoo};
        \end{loglogaxis}
    \end{tikzpicture}
   \vspace{-.5\captionsep}
    \captionof{figure}{Convergence on \emph{Sido} dataset without weights.}
   \vspace{.25\captionsep}
      \label{fig:sido_noweights}
    \end{minipage}
    \begin{minipage}{.49\textwidth}
    \centering
    \begin{tikzpicture}
        \begin{loglogaxis}[
            xlabel={Iteration $n$},
            ylabel=$f_n - f^*$,
            x label style={at={(axis description cs:0.5,-0.08)},anchor=north},
            y label style={at={(axis description cs:-0.11,.5)},rotate=0,anchor=south},
            legend entries={{$q(X) \propto X^T X$, step=0.00013}, {$q(X) \propto X^T X$, step=0.0017}, {uniform, step=0.00013}},
            legend style={legend pos = south west},
            style={font=\footnotesize},
            width=\figwidth]
            \pgfplotstableread{PAPER_sido_weights.txt}\tableyahoo
            \addplot+[mark=o] table[x=n, y=opt_small] {\tableyahoo};
            \addplot+[mark=+] table[x=n, y=opt_large] {\tableyahoo};
            \addplot+[mark=triangle] table[x=n, y=small] {\tableyahoo};
        \end{loglogaxis}
    \end{tikzpicture}
    \vspace{-.5\captionsep}
    \captionof{figure}{Convergence on \emph{Sido} dataset with weights.}
      \vspace{.025\captionsep}
    \label{fig:sido_weights}
    \end{minipage}
\end{figure}

\textbf{Link with other algorithms.}
When using this resampling and $\gamma =  {1}/ {\esp{X^TX}}$, the update step becomes
\begin{align}
   w_{i}     \label{eq:our_update}
            &=  w_{i-1} -  \frac{1}{X_i^T X_i} \left(X_i^T w_{i-1} - Y_i\right),
\end{align}
where $X_i$ is sampled from $q_*$.
This is very similar to normalized least mean squares (NLMS) by \cite{normalized}, i.e., we first normalize $X$ (and $Y$ by the same factor), and then we run the usual stochastic gradient descent with a step-size of 1.
However, while \textsc{NLMS} does not optimize the same overall objective function, we remember the norm of $X$ in $c_*$ and sample large ones more often and keep the same overall objective function. One can also notice some links with implicit stochastic gradient descent (ISGD) by \cite{implicit}, where the update rule is
\begin{align}
\label{eq:imp_update}
    w_{i} &= w_{i-1}- \frac{\gamma_i}{1+ \gamma_i X_i^TX_i} \left(X_i^T w_{i-1} - Y_i\right),
\end{align}
which is similar to \textsc{NLMS} and \eqref{eq:our_update} when $\gamma_i$ is large. As $\gamma_i$ is a decreasing step-size it means that during the early iterations, \textsc{ISGD} will behave like \textsc{NLMS} before switching to a regular stochastic gradient descent as $\gamma_i$ goes to 0. This comforts us in the idea that a large step-size is crucial during the early stages, as highlighted by our analysis.

\section{Experiments}

\vspace*{-.21cm}

\label{sec:exp}

\textbf{\emph{Yahoo} and \emph{Sido} datasets.}
We have tried to observe evidence of our predictions into two unbalanced datasets, ``yahoo'' and ``sido''.
The ``yahoo'' dataset\footnote{\url{webscope.sandbox.yahoo.com/}} is composed of millions of triple (ads, context, click) from the Yahoo front page where click is 1 if the user clicked on the given ad in the given context (composed of 136 Boolean features) and 0 otherwise. It is very unbalanced, as the click rate is  low. For this experiment we only looked at the rows corresponding to a specific ad (107355 rows), with a click rate of 0.03\%. We used it both with and without weights in order to give the same importance to both clicks and non clicks as explained in Section \ref{sec:asym}.

On Figures~\ref{fig:yahoo_noweights} and \ref{fig:yahoo_weights}, we compared the incidence of the step-size and the resampling of performance. When using weights, the maximum step-size happens to be divided by ten, but when resampling proportionally to $X^T X$, we recover almost the same step-size as without weights.
Moreover, without weights (Figure~\ref{fig:yahoo_noweights}), the lowest step-size performs best, which tends to indicate that the variance term is dominant. However, with weights (Figure~\ref{fig:yahoo_weights}), one can observe the more pronounced dependency in $\gamma$, which show that the bias term became non negligible.
Comparable results, but obtained on the ``sido'' dataset\footnote{\url{www.causality.inf.ethz.ch/data/SIDO.html}} are given in Figures~\ref{fig:sido_noweights} and \ref{fig:sido_weights}. This dataset is composed of 12678 points with 4932 features. The less frequent class represents 3.6\% of the points.

 We did not plot the graphs obtained when sampling proportionally to $\sqrt{X^T H^{-1} X}$ as they are mostly the same as without resampling. If we were to achieve the regime where the variance is completely dominant, the error would be at best be divided by two for the ``yahoo'' dataset and would be almost the same for ``sido'' (divided by 1.02). These potential gains were computed directly using the expression \eqref{eq:gain_var}.

\textbf{Synthetic data and bias-variance decomposition.}
We also observed exactly the bias and variance terms on synthetic data. The data consist in an infinite stream of points $X$ sampled from a normal distribution with covariance matrix $H$, so that the eigenvalues of $H$ are $(\frac{1}{i})_{1 \leq i\leq 25}$. $Y$ is given by $Y = X^T w_* + \eps$ for a fixed $w^*\in\reel^{25}$ and $\eps \sim \mathcal{N}(0, 1)$. On Figures~\ref{fig:synth_terms} and~\ref{fig:synth_total} one can see the decomposition of the error between variance and bias terms (Figure \ref{fig:synth_terms}) as well as the sum of both the variance and bias error  (Figure \ref{fig:synth_total}). We see that both the variance and bias curves quickly reach their asymptotic regime with a slope of $-1$ for the variance in log/log space and $-2$ for the bias as expected. We ran the algorithm with two step-sizes, one being ten times larger than the other. We observe the ratio of 100 as expected between the two bias curves and almost no difference at all for the variance ones, except for the first iterations.

One can also see the effect of the step-size $\gamma$ at a fixed number of iterations on Figure~\ref{fig:regime}. Due to the symmetry between $n$ and $\gamma$ in the expression of the bias term, one can notice the resemblance at first between this curve and the one obtained when plotting the $f_n - f^*$ against~$n$.
However, when the step-size is large, we get sooner into the regime where the variance dominates. At this point we observe almost no influence of the step-size on $f_n - f^*$. When getting closer to the maximum step-size, convergence becomes very slow as~$\rho$ becomes close to 1.

\begin{figure}[t]
\centering
\begin{minipage}{.49\textwidth}
\centering
\begin{tikzpicture}
    \begin{loglogaxis}[
        xlabel={Iteration $n$},
        ylabel=$f_n - f^*$,
        ymin=0.5e-8,
        ymax=6,
        x label style={at={(axis description cs:0.5,-0.08)},anchor=north},
        y label style={at={(axis description cs:-0.11,.5)},rotate=0,anchor=south},
        legend entries={{(bias) $\gamma = \gamma_0/10$}, {(variance) $\gamma = \gamma_0/10$}, {(bias) $\gamma =\gamma_0$}, {(variance) $\gamma =\gamma_0$}},
        legend style={legend pos = south west},
        style={font=\footnotesize},
        width=\figwidth]
        \pgfplotstableread{PAPER_hist_dim25_n2_0.1opt_long.txt}\tablesmall
        \pgfplotstableread{PAPER_hist_dim25_n2_1opt_long.txt}\tablelarge
        \addplot[color=blue, mark=o] table[x=n, y=bias] {\tablesmall}
            coordinate [pos=1] (B)
            coordinate [pos=0.5] (D);
        \addplot[color=cyan, mark=+] table[x=n, y=variance] {\tablesmall};
        \addplot[color=red, mark=x] table[x=n, y=bias] {\tablelarge}
            coordinate [pos=1] (A)
            coordinate [pos=0.5] (C)
            ;
        \addplot[color=magenta, mark=*] table[x=n, y=variance] {\tablelarge};

        \draw (A) -- (B);
        \node[anchor=west] at ($ (A)!0.5!(B) $) {($\Delta$)};

        \path[fill=none, draw=none] (B) -- node[sloped, anchor=south]{(slope = -2)} (D);
    \end{loglogaxis}
\end{tikzpicture}
\vspace{-.5\captionsep}
\captionof{figure}{Convergence per term on synthetic data.}
  \vspace{.025\captionsep}
\label{fig:synth_terms}
\end{minipage}
\begin{minipage}{.49\textwidth}
\centering
\begin{tikzpicture}
    \begin{loglogaxis}[
        xlabel={Iteration $n$},
        ylabel=$f_n - f^*$,
        ymin=0.5e-8,
        ymax=6,
        legend entries={{(total) $\gamma = \gamma_0/10$}, {(total) $\gamma =\gamma_0$}},
        legend style={legend pos = south west},
        x label style={at={(axis description cs:0.5,-0.08)},anchor=north},
        y label style={at={(axis description cs:-0.11,.5)},rotate=0,anchor=south},
        style={font=\footnotesize},
        width=\figwidth]
        \pgfplotstableread{PAPER_hist_dim25_n2_0.1opt_long.txt}\tablesmall
        \pgfplotstableread{PAPER_hist_dim25_n2_1opt_long.txt}\tablelarge
        \addplot[color=blue, mark=o] table[x=n, y=total] {\tablesmall}
            coordinate [pos=1] (B)
            coordinate [pos=0.5] (D);
        \addplot[color=red, mark=x] table[x=n, y=total] {\tablelarge}
            coordinate [pos=1] (A)
            coordinate [pos=0.5] (C)
            ;
    \end{loglogaxis}
\end{tikzpicture}
\vspace{-.5\captionsep}
\captionof{figure}{Convergence on synthetic data.}
  \vspace{.25\captionsep}
\label{fig:synth_total}
\end{minipage}
\end{figure}

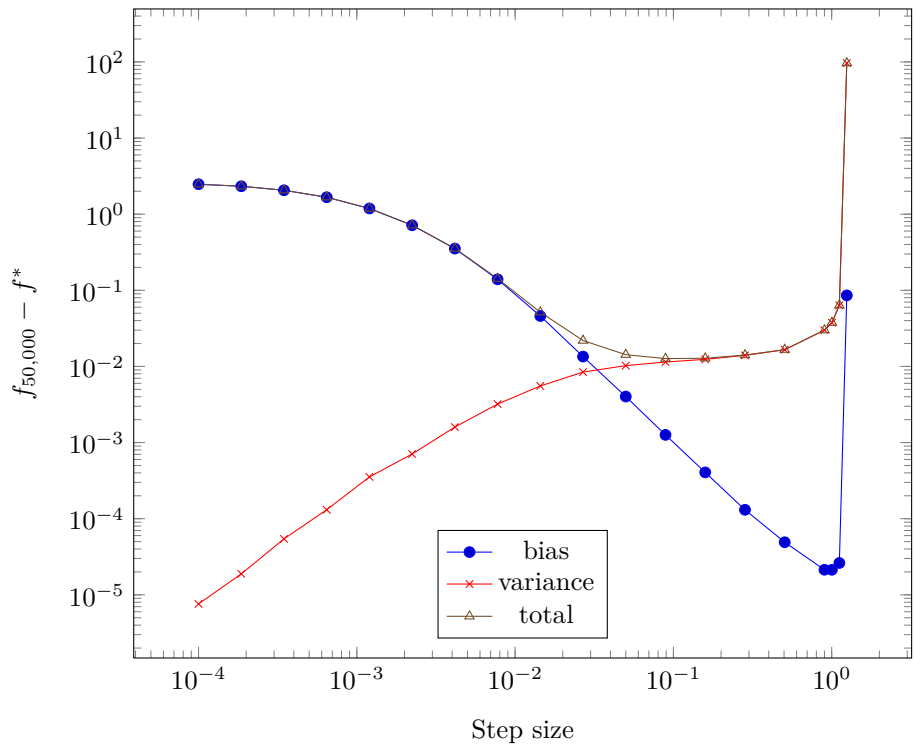
\begin{figure}[t]
    \begin{center}
    \begin{tikzpicture}
        \begin{loglogaxis}[
            xlabel={Step size},
            ylabel={$f_{50,000} - f^*$},
            width=\largefigwidth,
            legend entries={{bias}, {variance}, {total}},
            legend style={at={(0.5,0.03)},anchor=south},
            x label style={at={(axis description cs:0.5,-0.08)},anchor=north},
            y label style={at={(axis description cs:-0.11,.5)},rotate=0,anchor=south},
        ]
        \addplot+[mark=*] table[x=gamma, y=error_bias] {regime_opt_noise_500_sep.txt};
        \addplot+[mark=x] table[x=gamma, y=error_var] {regime_opt_noise_500_sep.txt};
        \addplot+[mark=triangle] table[x=gamma, y=error] {regime_opt_noise_500_sep.txt};

        \end{loglogaxis}
    \end{tikzpicture}
   \vspace{-.95\captionsep}
    \end{center}
    \caption{
    Impact of step-size on error, with its bias/variance decomposition.
    }
      \vspace{-.25\captionsep}
    \label{fig:regime}
\end{figure}

\section{Conclusion}

\vspace*{-.21cm}

In this paper, we have provided a tighter analysis of constant-step-size LMS, leading to  a better understanding of the convergence of the algorithm at different stages, in particular regarding how the initial condition is forgotten.

We were able to deduce different sampling schemes depending on what regime we are in. Sampling proportionally to $\sqrt{X^T H^{-1}X}$ is always asymptotically the best method. The potential gain is however limited most of the time.
 Besides, for datasets that are more ``difficult'', that is with moments that increases quickly, forgetting the initial condition can happen arbitrary slow due to the strong dependency in the step-size. If this is the case, then sampling proportionally to $X^T X$ will allow us to take a much larger step-size which will then lead to a smaller error.

Our work can be extended in several   ways: for simplicity we have focused on least-squares problems where the bias/variance decomposition is explicit. It would be interesting to see how these results can be extended to other smooth losses such as logistic regression, where constant-step size SGD does not converge to the global optimum~\citep{nedic2001convergence,bach13}. Moreover, we have only provided results in expectations and a precise study of higher-order moments would give a better understanding of additional potential effects of resampling.

\section*{Acknowledgements}
  This work was partially supported by a grant from the European Research Council (SIERRA project 239993). We thank Aymeric Dieuleveut and Nicolas Flammarion for interesting discussions related to this work.

\clearpage

\section*{Appendix}
    We give hereafter the proofs for the different results in the main paper. Unless otherwise specified, references are to the present Appendix. We first give a more thorough definition of the space in which our operators live. We then proceed to a proof of Lemma~\ref{lemma:rho}. Finally we detail the computation that allowed us to derive both theorems in this paper.

\ansection{Proof of Lemma 1}

We will first need some preliminary results in order to provide a proof of Lemma 1.

\ansubsection{Some Lemmas}

\begin{lemma}
    \label{lemma_prop_ops_aux}
    Let $A \in \syms$ be any symmetric matrix, then
    \[
    \forall x\in \reel^d, (x^T A x)^2 \leq \Tr{(x^Tx) A x x^T A}.
    \]
\end{lemma}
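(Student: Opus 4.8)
The plan is to recognize that, after simplifying the trace on the right-hand side, the claimed inequality is nothing more than the Cauchy--Schwarz inequality applied to the two vectors $x$ and $Ax$.

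First I would rewrite the right-hand side. Since $x^Tx$ is a scalar, $\Tr{(x^Tx)\,A x x^T A} = (x^Tx)\,\Tr{A x x^T A}$. Viewing $Ax$ as a column vector and using the cyclic invariance of the trace, $\Tr{A x x^T A} = \Tr{(Ax)(x^TA)} = (x^TA)(Ax) = x^T A^2 x$, where the last identification uses that $A$ is symmetric, so $A^T = A$. Hence the right-hand side equals $(x^Tx)(x^TA^2x) = \norm{x}^2\,\norm{Ax}^2$, again invoking symmetry of $A$ to write $x^TA^2x = (Ax)^T(Ax) = \norm{Ax}^2$.

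Next I would observe that the left-hand side is simply $(x^TAx)^2 = \dotp{x, Ax}^2$, the squared Euclidean inner product of $x$ and $Ax$. The inequality to prove therefore reads
\[
    \dotp{x, Ax}^2 \leq \norm{x}^2 \norm{Ax}^2,
\]
which is exactly Cauchy--Schwarz for the pair $x, Ax \in \reel^d$; squaring the bound $\abs{\dotp{x, Ax}} \leq \norm{x}\,\norm{Ax}$ concludes. There is essentially no obstacle here: the only mildly non-obvious point is the algebraic simplification of the trace, and the sole place where symmetry of $A$ enters is in identifying $x^TA^2x$ with $\norm{Ax}^2$ (equivalently, in guaranteeing $A^TA = A^2$). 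Once the right-hand side is rewritten as $\norm{x}^2\norm{Ax}^2$, the statement follows immediately.
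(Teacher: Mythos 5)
Your proof is correct and follows essentially the same route as the paper's: both reduce the claim to the Cauchy--Schwarz inequality $\dotp{x, Ax}^2 \leq \norm{x}^2 \norm{Ax}^2$ applied to the pair $x$ and $Ax$, using symmetry of $A$ to identify the trace expression with $\norm{x}^2\norm{Ax}^2$. The only difference is cosmetic: you simplify the right-hand side first and then invoke Cauchy--Schwarz, whereas the paper applies Cauchy--Schwarz first and then rewrites the bound as the trace.
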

\begin{proof}
    Using Cauchy-Schwarz inequality, one has
    \begin{align*}
        (x^T A x)^2 = [x^T ( Ax) ]^2 &\leq (A x)^T (A x) (x^T x)\\
                    &= x^T A A x (x^T x) = \Tr{(x^T x) A x x ^T A}.
    \end{align*}
\end{proof}

The following lemma is the proof of equation (2.4) in the original paper.
\begin{lemma}
\label{lemma_n2_opt}
Let $H \in \syms$ be a positive semi-definite matrix.
    If $\gamma >0 $ is so that
    \[
    \forall A\in \syms, \ 2\Tr{A^T H A} - \gamma \esp{(X^T A X)^2} > 0
    \]
    then
    \[\gamma < \frac{2}{\Tr{H}}.\]
\end{lemma}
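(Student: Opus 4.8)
The plan is to specialize the universally quantified hypothesis to a single, well-chosen test matrix and then close the resulting gap with Jensen's inequality. Recall from the setup that $H = \esp{XX^T}$, so that $\Tr{H} = \esp{X^T X} = \esp{\norm{X}^2}$. The natural candidate is $A = \Id$: since $\Id^T H \Id = H$ we have $\Tr{A^T H A} = \Tr{H}$, and since $X^T \Id X = \norm{X}^2$ we have $\esp{(X^T A X)^2} = \esp{\norm{X}^4}$. Feeding $A = \Id$ into the hypothesis therefore reduces it to the single scalar inequality $2\Tr{H} > \gamma \esp{\norm{X}^4}$.

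Next I would bound $\esp{\norm{X}^4}$ from below in terms of $\Tr{H}$. Because $t \mapsto t^2$ is convex, Jensen's inequality (equivalently, Cauchy--Schwarz applied to the pair $\norm{X}^2$ and the constant $1$) gives $\esp{\norm{X}^4} = \esp{(\norm{X}^2)^2} \geq (\esp{\norm{X}^2})^2 = (\Tr{H})^2$. Substituting this into the previous display yields $2\Tr{H} > \gamma (\Tr{H})^2$. Dividing by $\Tr{H}$ then produces exactly $\gamma < 2/\Tr{H}$. One should dispose of the degenerate case separately: if $\Tr{H} = 0$ then $X = 0$ almost surely, so the hypothesis reads $0 > 0$ and is vacuous, meaning no admissible $\gamma$ exists; hence we may assume $\Tr{H} > 0$ and the division is legitimate.

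There is no genuinely hard step here, so I do not anticipate a real obstacle; the entire content is the choice of test matrix together with a one-line convexity estimate. The only thing requiring a moment's thought is recognizing that evaluating the hypothesis at $A = \Id$ is the efficient move — it collapses the operator-valued condition to a scalar one involving precisely $\Tr{H}$ and $\esp{\norm{X}^4}$ — and that Jensen is exactly what bridges $\esp{\norm{X}^4}$ and $(\Tr{H})^2$. Care with the positive-semidefinite boundary case $\Tr{H}=0$ is the only subtlety worth flagging.
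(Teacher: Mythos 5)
Your proof is correct and follows essentially the same route as the paper's: the paper also combines Jensen's inequality (\(\esp{(X^TAX)^2} \geq \esp{X^TAX}^2 = \Tr{AH}^2\)) with the test matrix \(A = \sum_i u_i u_i^T\) built from an orthonormal eigenbasis of \(H\), which is just the identity matrix written in diagonalized form. Your version is if anything slightly cleaner, since plugging in \(A = \Id\) directly avoids the unnecessary eigendecomposition, and your handling of the degenerate case \(\Tr{H}=0\) is a sensible addition given that the lemma is stated for semi-definite \(H\).
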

\begin{proof}
    Let $A \in \syms$,
    $2\Tr{A^T H A} - \gamma \esp{(X^T A X)^2} > 0$ implies that with Jensen's inequality,
    \begin{align*}
        2 \Tr{A^T H A} - \gamma \Tr{A H}^2 &=
             2\Tr{A^T H A} - \gamma \Tr{A \esp{X X^T}}^2\\
             &=2\Tr{A^T H A} - \gamma \esp{X^T A X}^2\\
             &> 0.
    \end{align*}
 Then, let $(u_i)_i \in \reel^{d\times d}$ an orthogonal basis that diagonalizes $H$ and $\lambda_i$ the eigenvalues associated
    with each eigenvector. Then, taking $A = \sum_i u_i u_i^T$, we get
    \begin{align*}
        2 \Tr{A^T H A} - \gamma \Tr{A H}^2 &= 2 \Tr{\sum_{i, j} u_i u_i^T H u_j u_j^T} - \gamma \left(\sum_i u_i^T H u_i\right)^2\\
        &= 2 \left(\sum_i \lambda_i\right) - \gamma \left(\sum_i \lambda_i \right)^2 \geq 0,
    \end{align*}
    so that \[
    \gamma < \frac{2}{\sum_i \lambda_i} = \frac{2}{ \Tr{H} }.
    \]
\end{proof}
\begin{lemma}
    \label{lemma:one_condition}
    Let $\gamma > 0$, we can define $T = H_L + H_R - \gamma M$ as in Section~\ref{sec:algebra}.
    If $\gamma < \frac{2}{\Tr{H}}$, then \[
        I - \gamma T \succ -I.
        \]
    and if we are in dimension 1, \[
        I - \gamma T \succeq 0
    \]
\end{lemma}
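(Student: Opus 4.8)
\section*{Proof plan for Lemma~\ref{lemma:one_condition}}

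The plan is to reduce both operator inequalities to scalar quadratic inequalities through the Frobenius inner product $\langle A, B\rangle = \Tr{A^T B}$ on $\syms$. First I would record the quadratic form of $T$: a direct computation from the definitions of $H_L$, $H_R$ and $M$ gives, for every $A\in\syms$,
$$\langle A, T A\rangle = 2\Tr{A^T H A} - \gamma\,\esp{(X^T A X)^2},$$
which is exactly the expression appearing in condition~\eqref{eq:strong}. Since $I - \gamma T \succ -I$ means $2I - \gamma T \succ 0$, the claim is equivalent to
$$2\norm{A}_F^2 - 2\gamma\Tr{A^T H A} + \gamma^2\,\esp{(X^T A X)^2} > 0 \quad\text{for all } A\in\syms,\ A\neq 0.$$
Because the fourth-order term now carries a $+$ sign, I would bound it \emph{from below} by Jensen's inequality, $\esp{(X^T A X)^2}\ge\big(\esp{X^T A X}\big)^2 = \Tr{A H}^2$, reducing the claim to the purely quadratic statement $2\norm{A}_F^2 - 2\gamma\Tr{A^T H A} + \gamma^2\Tr{A H}^2 > 0$.

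Next I would diagonalize $H=\sum_i\lambda_i u_iu_i^T$ and write $A=(a_{ik})$ in this eigenbasis, so that $\norm{A}_F^2=\sum_{i,k}a_{ik}^2$, $\Tr{A^T H A}=\tfrac12\sum_{i,k}(\lambda_i+\lambda_k)a_{ik}^2$ and $\Tr{A H}=\sum_i\lambda_i a_{ii}$. The left-hand side then splits as
$$\sum_{i\neq k}\big(2-\gamma(\lambda_i+\lambda_k)\big)a_{ik}^2 + \Big[\sum_i(2-2\gamma\lambda_i)a_{ii}^2 + \gamma^2\big({\textstyle\sum_i}\lambda_i a_{ii}\big)^2\Big].$$
For the off-diagonal sum, $\lambda_i+\lambda_k\le\Tr{H}$ when $i\neq k$ (all eigenvalues are nonnegative), and the hypothesis $\gamma<2/\Tr{H}$ gives $2-\gamma(\lambda_i+\lambda_k)>0$, so this part is nonnegative, and strictly positive unless $A$ is diagonal. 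The bracketed diagonal part equals $b^T Q b$ with $b=(a_{ii})_i$ and $Q = 2I - 2\gamma\Lambda + \gamma^2\vec\lambda\,\vec\lambda^T$, where $\Lambda=\mathrm{diag}(\lambda_i)$ and $\vec\lambda=(\lambda_i)_i$; it therefore remains to prove $Q\succ 0$.

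To establish $Q\succ 0$, the key observation is that the constraint $\gamma\Tr{H}=\gamma\sum_i\lambda_i<2$ forces \emph{at most one} index $i_0$ with $\gamma\lambda_{i_0}>1$, so $2-2\gamma\lambda_i\ge 0$ for every other $i$. I would then complete the square in the single coordinate $b_{i_0}$: after folding in the rank-one term, its coefficient is $2(1-\gamma\lambda_{i_0})+\gamma^2\lambda_{i_0}^2 = 1+(1-\gamma\lambda_{i_0})^2>0$, and minimizing over $b_{i_0}$ leaves a residual quadratic form in the remaining coordinates whose positivity, after a Cauchy--Schwarz estimate of $\sum_{i\neq i_0}\lambda_i b_i$ using $\sum_{i\neq i_0}\gamma\lambda_i<2-\gamma\lambda_{i_0}<1$, reduces to the scalar inequality $\tau<1$ with $\tau=2-\gamma\lambda_{i_0}$ --- which holds precisely because $\gamma\lambda_{i_0}>1$. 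I expect this step to be the main obstacle, and it is the only place where the factor $2$ in the bound $2/\Tr{H}$ is genuinely used: discarding the $M$-term outright would only yield the weaker range $\gamma<1/\normop{H}$ (smaller than $2/\Tr{H}$ by up to a factor of two), so the rank-one correction $\gamma^2\vec\lambda\,\vec\lambda^T$ must be exploited to dominate the single negative diagonal entry. Finally, the one-dimensional case is immediate: there $T$ is multiplication by $2\esp{X^2}-\gamma\esp{X^4}$, so $I-\gamma T$ is multiplication by $1-2\gamma\esp{X^2}+\gamma^2\esp{X^4}\ge(1-\gamma\esp{X^2})^2\ge 0$ by Jensen's inequality, giving $I-\gamma T\succeq 0$.
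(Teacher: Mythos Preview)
Your plan is correct and closely parallels the paper's own proof: both apply Jensen to replace $\esp{(X^TAX)^2}$ by $\Tr{AH}^2$, diagonalize $H$, split the resulting expression into off-diagonal terms (dispatched via $\lambda_i+\lambda_k\le\Tr{H}$) and a diagonal quadratic form $b^TQb$ with $Q=2I-2\gamma\Lambda+\gamma^2\vec\lambda\,\vec\lambda^{\,T}$; the one-dimensional case is handled identically via $\esp{X^4}\ge\esp{X^2}^2$.

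The only substantive difference is in how $Q\succ 0$ is established. The paper takes the Schur complement that eliminates the $(d-1)$-dimensional ``good'' block $D$ --- this requires inverting $D$ via Sherman--Morrison and then checking a rational inequality in $y=\gamma\lambda_1$ and $\alpha$ by ``standard analysis tools''. You take the \emph{dual} Schur complement: eliminate the single bad coordinate $b_{i_0}$ (equivalently, complete the square there) and bound the residual rank-one perturbation by Cauchy--Schwarz using $\sum_{i\neq i_0}\gamma\lambda_i<2-\gamma\lambda_{i_0}<1$. This is arguably more elementary, since it avoids the matrix inversion and the unspecified scalar analysis. One small omission in your plan: you should explicitly dispatch the easy case in which no index satisfies $\gamma\lambda_{i_0}>1$ (then all $1-\gamma\lambda_i\ge 0$, at most one equals $0$, and the rank-one term immediately gives $Q\succ 0$); the paper does handle this case.
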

\begin{proof}

    The Lemma is equivalent to $\forall A\in \syms, A \neq 0 \Rightarrow \dotp{A, (2 I- \gamma T) A} > 0$.

    If we are in dimension $d=1$, then we have $I -\gamma T = 1 - 2 \gamma h + \gamma m^2$ where $h = \esp{X^2}$ and $m = \esp{X^4} \geq h^2$ so that
    $I - \gamma T \geq (1 - \gamma h)^2 \geq 0$.

    Let now assume we are in dimension two or more.
    Let $A\in \syms$ with $A \neq 0$. Let $P \in \reel^{d \times d}$ be an orthogonal matrix such that $P H P^{-1} = D$ where $D$ is diagonal with eigenvalues ordered in decreasing order, with $\lambda_i = D_{i, i}$ and $\lambda_1 = L$. We will denote $U = P A P^{-1} = PAP^T$.

    \begin{align*}
        \dotp{A, (2I- \gamma T) A}  &=
        \Tr{A^T (2I - \gamma T) A}\\
        &=2\Tr{A^T A} - 2 \gamma\Tr{A^T H A} + \gamma^2 \esp{\left(X^T A X\right)^2}\\
        &\geq 2\Tr{A^T A} - 2 \gamma\Tr{A^T H A} + \gamma^2 \esp{\left(X^T A X\right)}^2\\
        &= 2\Tr{A^T A} - 2 \gamma\Tr{A^T H A} + \gamma^2 \Tr{A H}^2\\
        &= 2\Tr{U^T U} - 2 \gamma\Tr{U^T D U} + \gamma^2 \Tr{U D}^2\\
        &= \sum_{i, j=1}^d 2 U_{i,j}^2 - 2 \gamma U_{i, j}^2 \lambda_i + \gamma^2 U_{i, i} U_{j, j} \lambda_{i} \lambda_{j}\\
        &= \left(\sum_{i \neq j} 2 U_{i, j}^2 (2 - \gamma (\lambda_i + \lambda_j))
        \right)
        + \sum_{i=1}^d 2 U_{i, i}^2 - 2 \gamma U_{i, i}^2 \lambda_i
        + \gamma^2 \left(\sum_{i=1}^d U_{i, i} \lambda_i\right)^2.
                                    \end{align*}

    The first sum immediately defines a definite positive form over the subspace generated by $(U_{i, j})_{i\neq j}$ as $\gamma < \frac{2}{\lambda_i + \lambda_j}$ for all $i \neq j$. The second part also defines a bilinear form over the orthogonal subspace generated by $(U_{i, i})_{1 \leq i\leq d}$. $2 I - \gamma T$ is definite positive if and only if those two forms are definite positive.
    We will introduce $x_i = U_{i, i}$ so that the second form is given by $x^T G x$ where
    $G = 2 I - 2 \gamma\mathrm{Diag}(\Lambda) + \gamma^2 \Lambda \Lambda^T$, with $\Lambda = (\lambda_i)_{1\leq i\leq d}$ and $\mathrm{Diag}(\Lambda)$ the diagonal matrix with values from $\Lambda$ on the diagonal.

    We can decompose G as
    \begin{align*}
        G &= \begin{pmatrix}
            B & \gamma^2 \lambda_1 C^T \\
            \gamma^2 \lambda_1 C & D
        \end{pmatrix},
    \end{align*}
    with $B = 2 - 2 \gamma \lambda_1 + \gamma^2 \lambda_1$,
    $C = (\lambda_i)_{2\leq i\leq d}$ and $D = 2 I - 2 \gamma \mathrm{Diag}(C) + \gamma^2 C C^T$. Using the Schur completement condition for positive definiteness, we have that $G \succ 0$ if and only if $D \succ 0$ and $B - \gamma^4 \lambda_1^2 C^T D^{-1} C > 0$. We immediately have that $D \succ 0$ as $I - \gamma \mathrm{Diag}(C) \succ 0$, indeed, for all $d \geq i \geq 2$, we have that $\gamma \lambda_i < 1$.

    Let us introduce $E = 2 I - 2\gamma \mathrm{Diag}(C)$, then we have
    \begin{align*}
        D^{-1} &=  E^{-1} - \frac{\gamma^2}{1 + \gamma^2 C^T E^{-1} C} E^{-1} C C^T E^{-1}.
    \end{align*}
We will assume that $\sum_{i =2}^d \lambda_i < \lambda_1$, otherwise one trivially has that $\gamma \lambda_1 < 1$ and $G \succ 0$.
    Let us denote
    \begin{align*}
        q &= C^T E^{-1} C\\
        &= \sum_{i=2}^d \frac{\lambda_i^2}{2 (1 - \lambda_i \gamma)}\\
        &\leq \frac{(\sum_{i=2}^d \lambda_i)^2}{2 (1 - \gamma \sum_{i=2}^d \lambda_i)}\\
        &= \frac{l^2}{2 (1 - \gamma l)},
    \end{align*}
    where $l = \sum_{i=2}^d \lambda_i$. We will take $l = \lambda_1 \alpha$ so that $0 < \alpha < 1$. We have
    \begin{align*}
        B - \gamma^4 \lambda_1^2 C^T D^{-1} C &=
        \gamma^2\lambda_1^2 + 2 - 2 \lambda_1 \gamma
        -\gamma^4 \lambda_1^2 \left(q - \frac{\gamma^2 q^2}{1+\gamma^2 q}\right)\\
        &=\frac{ \gamma^2\lambda_1^2}{1 +\gamma^2 q} - 2 \lambda_1\gamma + 2\\
        &\geq \frac{\gamma^2 \lambda_1^2}{1+\gamma^2 \frac{l^2}{2 (1 - \gamma l)}}
            - 2 \lambda_1\gamma + 2.
    \end{align*}
    Denoting $y = \gamma \lambda_1$, we get
    \begin{align*}
        B - \gamma^4 \lambda_1^2 C^T D^{-1} C &=
        \frac{2y^2 (1 - y \alpha)}{2 - 2 y\alpha + \alpha^2 y^2}
        - 2 y + 2.
    \end{align*}
    Using standard analysis tools, one can show that the last quantity is positive for $0 < y < \frac{2}{1+\alpha}$ and $0 < \alpha < 1$.
    As a conclusion, $G$ is definite positive and so is $2 I - \gamma T$.

            \end{proof}

\begin{lemma}
\label{lemma_prop_ops}
Let $\gamma > 0$, we can define $T = H_L + H_R - \gamma M$ which is symmetric and is stable over $\syms$.

 If
\begin{align*}
    \forall A\in \syms, \,2 \Tr{A^T H A} - \gamma \esp{(X^TA X)^2} > 0,
\end{align*}
or (this second assumption implies the first one)
 \begin{align*}
        \esp{X X^T} - \gamma \esp{X^T X X X^T} \succ 0,
    \end{align*}

    then
    \begin{itemize}
        \item $\normop{I - \gamma H} < 1$ ,
        \item $T \succ 0$ ,
        \item $\normop{I-\gamma T}  < 1$.
    \end{itemize}
\end{lemma}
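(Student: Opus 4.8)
The plan is to establish the three conclusions in turn, after first reducing the two hypotheses to a single one. I begin by verifying the parenthetical claim that the second assumption $\esp{XX^T} - \gamma\esp{(X^TX)XX^T} \succ 0$ implies the first. Fix a nonzero $A \in \syms$ and write its columns as $v_k = Ae_k$. Lemma~\ref{lemma_prop_ops_aux} gives $(X^TAX)^2 \leq (X^TX)(X^TA^2X)$ pointwise, so $\esp{(X^TAX)^2} \leq \esp{(X^TX)\,X^TA^2X}$. Decomposing $X^TA^2X = \norm{AX}^2 = \sum_k (v_k^TX)^2$ and applying the second assumption to each $v_k$ (which reads $\gamma\esp{(X^TX)(v_k^TX)^2} < v_k^THv_k$, with strict inequality for the nonzero columns and trivial equality for the zero ones), I sum over $k$ to obtain $\gamma\esp{(X^TX)\norm{AX}^2} < \sum_k v_k^THv_k = \Tr{A^THA}$. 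Hence $\gamma\esp{(X^TAX)^2} < \Tr{A^THA} \leq 2\Tr{A^THA}$, which is exactly the first condition. From now on I work only under the first condition.

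Next I invoke Lemma~\ref{lemma_n2_opt}, which turns the first condition into the scalar bound $\gamma < 2/\Tr{H}$; this single inequality drives the two spectral estimates. For the first conclusion, since $H \succ 0$ by the setup and $\normop{H} \leq \Tr{H}$ (the largest eigenvalue is dominated by the sum of the nonnegative eigenvalues), the bound gives $0 < \gamma\lambda \leq \gamma\normop{H} < 2$ for every eigenvalue $\lambda$ of $H$, so the eigenvalues $1 - \gamma\lambda$ of $I - \gamma H$ all lie in $(-1,1)$ and $\normop{I - \gamma H} < 1$.

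For the second conclusion I compute the quadratic form of $T$ directly: for $A \in \syms$, expanding $TA = HA + AH - \gamma\esp{(X^TAX)XX^T}$ and taking the Frobenius inner product with $A$ yields
\begin{align*}
    \dotp{A, TA} = 2\Tr{A^THA} - \gamma\esp{(X^TAX)^2},
\end{align*}
which is positive for $A \neq 0$ by the first condition; thus $T \succ 0$. The third conclusion combines this lower bound with an upper one: $T \succ 0$ shows the eigenvalues of $\gamma T$ on $\syms$ are positive, while Lemma~\ref{lemma:one_condition}, whose hypothesis $\gamma < 2/\Tr{H}$ is already in hand, gives $2I - \gamma T \succ 0$, i.e.\ those eigenvalues lie below $2$. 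Hence $-I \prec I - \gamma T \prec I$ and $\normop{I - \gamma T} < 1$.

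The routine computations (the pointwise Cauchy--Schwarz bound and the trace identity for $\dotp{A,TA}$) are mechanical; the only place demanding care is the reduction of the second hypothesis to the first, where one must track the columnwise decomposition and ensure the strict inequality survives the summation --- which it does because a nonzero symmetric $A$ has at least one nonzero column. Everything else follows by assembling the two already-proved lemmas.
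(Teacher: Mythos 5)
Your proof is correct and takes essentially the same route as the paper: Lemma~\ref{lemma_n2_opt} yields $\gamma < 2/\Tr{H}$, the eigenvalue argument gives $\normop{I-\gamma H}<1$, the identity $\dotp{A, T A} = 2\Tr{A^T H A} - \gamma \esp{(X^TA X)^2}$ gives $T \succ 0$ under the first hypothesis, and Lemma~\ref{lemma:one_condition} supplies $I - \gamma T \succ -I$. The only difference is organizational: you reduce the second hypothesis to the first upfront via a columnwise application of Lemma~\ref{lemma_prop_ops_aux}, whereas the paper keeps the second hypothesis inline in the $T \succ 0$ computation, using the same Cauchy--Schwarz bound followed by a kernel-restriction argument to show $\Tr{A^T \left(2H - \gamma \esp{(X^TX) X X^T}\right) A} > 0$; both hinge on the same key inequality, and your version has the small merit of actually justifying the paper's parenthetical claim that the second assumption implies the first.
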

\begin{proof}
    We should first notice that using Lemma~\ref{lemma_n2_opt}, we necessarely have \begin{equation}
    \label{eq:gamma_bound}
    \gamma < \frac{2}{\Tr{H}}.
    \end{equation}

    We first need, $I - \gamma H \prec I$ which is always true as long as $H$ is invertible  (i.e. $H$ is positive).
    Then we need $I - \gamma H \succ -I$, or $\gamma H \prec 2 I$, which means $\gamma < \frac{2}{L}$ where $L$ is $H$ largest eigenvalue. However this is implied by \eqref{eq:gamma_bound}.

    Now, we need $I - \gamma T \prec I$, i.e., $T \succ 0$ (this will also prove $T$ invertible). This is equivalent to
    \begin{align*}
         \forall A \in \syms, A\neq 0 \Rightarrow \dotp{A, T A} > 0
    \end{align*}

Let us compute this term for $A \in \syms$ with $A \neq 0$
    \begin{align*}
        \dotp{A, T A} &= \Tr{A^T (T A)} \\&=  \Tr{A^T A H + A^T H A -\gamma A^T\esp{X X^T A X X^T}}\\
        &= 2\Tr{A^T H A} - \esp{(X^T A X)^2} \quad\text{and we can stop here if we have first assumption}\\
            &\geq
            \Tr{A^T \left(2 H - \gamma \esp{X^T X X X^T}\right) A } \quad \text{using Lemma \ref{lemma_prop_ops_aux}}
    \end{align*}

    A sufficient condition here is that  $K = 2 H - \gamma \esp{X^T X X X^T} \succ 0$. Indeed, let $I = \mathrm{Ker}(A)^\bot$ be the orthogonal space of the kernel of $A$, which is stable under $A$ as $A$ is symmetric, so we can define $A'$ the restriction of $A$ to $I$ which is invertible. It is of dimension greater than 1 as $A$ is not 0. $K$ defines on $I$ a bilinear symmetric definite positive application $K'$. Then, $\Tr{A^T K A} = \Tr{A'^T K' A'} > 0$ because $A'^T K' A'$ is also symmetric definite positive.

Finally, we want $I - \gamma T \succ -I$. Using Lemma \ref{lemma:one_condition}, this is a direct consequence of \eqref{eq:gamma_bound}.
\end{proof}

\ansubsection{Proof of Lemma \ref{lemma:rho}}

Let assume $0 < \gamma < \gmax$.
Lemma~\ref{lemma_prop_ops} already tells us that our operators have good properties as we have $\rho < 1$ and $T \succ 0$.
We will now get a finer result in order to have an explicit bound on $\rho$ depending on $\gamma$.

As we will be using different values for $\gamma$ we will explicitely mark the dependency in $\gamma$ for $T$ by writing $T(\gamma)$. We will only consider $0 < \gamma < \gmax$ so that $T(\gamma)$ is positive.
We will denote by $L_{T(\gamma)}$ the largest eigenvalue of $T(\gamma)$ and by $\mu_{T(\gamma)}$ its smallest. We then have
\begin{align*}
    \rho_T(\gamma) = \max(1 - \gamma \mu_{T(\gamma)}, \gamma L_{T(\gamma)} - 1).
\end{align*}
One should also notice that the smallest eigenvalue of $H_L + H_R$ is $2\mu$ and the largest $2 L$.

We have $T(\gmax) \succeq 0$ using Lemma~\ref{lemma_prop_ops}. For any $0<\gamma < \gmax$ we can define $\alpha = \frac{\gamma}{\gmax}$. Then we have
\begin{align*}
    T(\gamma) &= (1 - \alpha) (H_L + H_R) + \alpha (H_L + H_R) - \alpha \gmax M\\
              &= (1 - \alpha) (H_L + H_R) + \alpha T(\gmax)\\
              &\succeq (1 - \alpha) (H_L + H_R)\\
              &\succeq 2 (1 - \alpha) \mu,
\end{align*}
so that $\mu_{T(\gamma)} \geq 2 (1 - \alpha) \mu$.

Using Lemma~\ref{lemma:one_condition} we have that $T(\gmax) \preceq \frac{2 I}{\gmax}$ so that we obtain
\begin{align*}
    T(\gamma) &= (1 -\alpha) (H_L + H_R) + \alpha T(\gmax) \\
    &\preceq 2 (1 - \alpha) L + \frac{2\alpha}{\gmax}.
\end{align*}

As a consequence if we take
\begin{align*}
    a(\gamma) = 1 - 2 \alpha \gmax (1 - \alpha)\mu\\
    b(\gamma) = 2 (1 - \alpha) \alpha \gmax L + 2\alpha^2 - 1
\end{align*}
we have $\rho_T(\gamma) = \max(a(\gamma), b(\gamma))$. Besides, if we are in dimension $d=2$ or more,
\begin{align*}
    a(\gamma) - b(\gamma) &=
    2 - 2\alpha \gmax (L + \mu) (1 - \alpha) - 2 \alpha^2\\
    &\geq 2 - 4\alpha (1 - \alpha) - 2 \alpha^2 \quad\text{as $\gmax (L + \mu) \leq 2$}\\
    &= 2 + 2 \alpha^2  - 4 \alpha\\
    &\geq 0,
\end{align*}
so that,
\[
    \rho_T(\gamma) \leq 1 - 2 \gamma \mu(1 - \frac{\gamma}{\gmax}).
\]

In dimension $d=1$, the same result holds as we have $1 - \gamma T \geq 1 - 2 \gamma H + \gamma^2 H^2 \geq 0$ so that $\gamma L_{T(\gamma)} - 1 \leq 0$.

We can now look as $\rho_H$ which is given by $\rho_H = \max(1 - \gamma \mu, \gamma L - 1)$.

Let assume we are in dimension 2 or more, then we have $1 - \gamma \mu \geq \gamma L - 1$ so that $\rho_H = 1 - \gamma \mu$.
In dimension 1, we have $\rho_H = \abs{1 - \gamma\mu}$. Comparing $\rho_H$ and $\rho_T$ we obtain the result of this Lemma.

Finally, if $\gamma > \gmax$, then $T$ has a negative eigenvalue and so $\rho_T > 1$ and $\rho > 1$. \note{FB: is this really true?} \noteal{It is true that $\rho > 1$ as $I-\gamma T$ will have one eigen value strictly larger than 1. however it does not necessarely prove divergence.}

\ansection{Proof of the theorems}

\ansubsection{Complete expression of the covariance matrix}

Let us recall that we have the update rule
\begin{align}
\label{eq:update}
    \eta_i = (I - \gamma X_i X_i^T) \eta_i + \gamma \eps_i X_i.
\end{align}

We can then introduce the following matrices
\begin{align*}
    M_{k, j} = \left(\prod_{i=k+1}^{j} \left(I - \gamma X_{i} X_i^T \right)\right)^T \in \reel^{d \times d},
\end{align*}

and by iterating over \eqref{eq:update} we obtain,
\begin{align*}
    \eta_{n} = \gamma \sum_{k=1}^{n} M_{k,n} X_k \eps_k + M_{0,n} \eta_0.
\end{align*}

We have
\begin{align*}
    \bar{\eta}_n &= \frac{\gamma}{n} \sum_{j=0}^{n-1} \sum_{k=1}^j M_{k,j}X_k \eps_k + \frac{1}{n}\sum_{j=0}^{n-1} M_{0, j} \eta_0\\
                 &= \frac{\gamma}{n} \sum_{k=1}^{n-1} \left(
                    \sum_{j=k}^{n-1} M_{k, j}
                 \right) X_k \eps_k + \frac{1}{n}\sum_{j=0}^{n-1} M_{0, j} \eta_0.
\end{align*}
One can already see the decomposition between the variance and bias term, one depending only on $\eta_0$ and the other on $\eps$.

If we assume that $\eps_k$ is independent of $X_k$, then we can immediately see that when computing $\esp{\bar{\eta}_n\bar{\eta}_n^T}$, cross-terms between bias and variance will be zero as they will contain only one $\eps_k$.
If that is not true,
then extra cross-terms will appear and there is no longer a simple bias/variance decomposition.
Let us look at one of the cross terms,
\begin{align*}
    \frac{\gamma}{n^2} \esp{M_{k,j} X_k \eps_k \eta_0^T M_{0,p}}.
\end{align*}
If $p < k$, then one can immediately notice that $X_k \eps_k$ will be independant from the rest so that the term will be 0, as it is always true that $\esp{\eps X} = 0$. If not, $X_k$ will also appear in $M_{0,p}$ as a factor $I - \gamma X_k X_k^T$ so that the term can be expressed as $G(\esp{X_k \eps_k \eta_0^T X_k X_k^T})$ where $G$ is a linear operator obtained using the independance of the other $X_i$ and $\eps_i$ for $i \neq k$. As a consequence, we can recover a simple decomposition as soon as
\begin{align*}
    \forall \,1 \leq i, j, k \leq d, \esp{X^{(i)} X^{(j)} X^{(k)} \eps} = 0,
\end{align*}
where $X^{(i)}$ is the i-th component of $X$.

In any case, because of Minkowski's inequality as noted in \cite{bach13}, we always have that
\begin{align*}
    f^{\mathrm{total}}_n - f^* \leq 2 (f^{\mathrm{bias}}_n - f^*) + 2 (f^{\mathrm{variance}}_n - f^*),
\end{align*}
so that we are never too far from the true error when assuming $X$ and $\eps$ independant.

\ansubsection{Proof for the bias term}
 First, let us assume that $\eps_k = 0 \as$.
 Then we have
 \begin{align*}
     \bar{\eta}_n  &= \frac{1}{n}\sum_{j=0}^{n-1} M_{0, j} \eta_0,
 \end{align*}
 and
 \begin{align*}
     \esp{\bar{\eta}_n \bar{\eta}_n^T} = &
        \frac{1}{n^2}\sum_{i=0}^{n-1}\sum_{j=0}^{n-1} \esp{M_{0, i} \eta_0 \eta_0^T M_{0, j}^T}\\
        = &\frac{1}{n^2}\sum_{i=0}^{n-1}\left(
            \esp{M_{0, i} \eta_0 \eta_0^T M_{0, i}^T
            + \sum_{j = i+1}^{n-1} M_{0, i} \eta_0 \eta_0^T M_{0, i}^T M_{i, j}^T
            + \sum_{j = 0}^{i-1} M_{j, i} M_{0, j} \eta_0 \eta_0^T M_{0, j}^T}
        \right)\\
        = &\frac{1}{n^2}\sum_{i=0}^{n-1}\Biggl(
            \esp{M_{0, i} \eta_0 \eta_0^T M_{0, i}^T}
            + \sum_{j = i+1}^{n-1} \esp{M_{0, i} \eta_0 \eta_0^T M_{0, i}^T} (I - \gamma H)^{j - i}\\
            &{}+ \sum_{j = 0}^{i-1} (I - \gamma H)^{i - j} \esp{M_{0, j}^T \eta_0 \eta_0^T M_{0, j}}
        \Biggr) \mbox{ because of independence assumptions,}\\
        = &\frac{1}{n^2}\sum_{i=0}^{n-1}\left(
            \esp{M_{0, i} \eta_0 \eta_0^T M_{0, i}^T}
            + \sum_{j = i+1}^{n-1} \esp{M_{0, i} \eta_0 \eta_0^T M_{0, i}^T} (I - \gamma H)^{j - i}
            \right)\\
        &{}+
            \frac{1}{n^2}\sum_{j=0}^{n-1}
            \left(\sum_{i = j+1}^{n-1} (I - \gamma H)^{i - j} \esp{M_{0, j}^T \eta_0 \eta_0^T M_{0, j}}
        \right)\\
        = &\frac{1}{n^2}\sum_{i=0}^{n-1}\Biggl(
            \esp{M_{0, i} \eta_0 \eta_0^T M_{0, i}^T}\\
            &{}+ \sum_{j = i+1}^{n-1} \left(\esp{M_{0, i} \eta_0 \eta_0^T M_{0, i}^T} (I - \gamma H)^{j - i} +
                                                (I - \gamma H)^{j - i} \esp{M_{0, i} \eta_0 \eta_0^T M_{0, i}^T}\right)
        \Biggr)\\
        & \mbox{ by exchanging the role of } i \mbox{ and } j \mbox{ in the last equation,} \\
        = &\frac{1}{n^2}\sum_{i=0}^{n-1} \bigg(
            \esp{M_{0, i} \eta_0 \eta_0^T M_{0, i}^T}\\
           &{}+ \esp{M_{0, i} \eta_0 \eta_0^T M_{0, i}^T} \left((I - \gamma H) - (I - \gamma H)^{n-i}\right)(\gamma H)^{-1}\\
           &{}+ (\gamma H)^{-1}\left((I - \gamma H) - (I - \gamma H)^{n-i}\right)\esp{M_{0, i} \eta_0 \eta_0^T M_{0, i}^T} \bigg).
 \end{align*}

We only used the fact that $X_i$ and $X_j$ are independent as soon as $i \neq j$, so that we can condition on $X_1, \ldots X_i$
to obtain $M_{1, i} (I - \gamma H)^{j -i}$.
Now we need to express $\esp{(I - \gamma X_i X_i^T)A (I - \gamma X_i X_i^T)}$ for $A$ some matrix that is independent of $X_i$. Using the notation we introduced, we have immediately that
\begin{align*}
    \esp{(I - \gamma X_i X_i^T)A (I - \gamma X_i X_i^T)} &=
        A - \gamma A H - \gamma H A + \gamma^2 \esp{X^T A X X X^T}\\
        &= (I - \gamma H_R - \gamma H_L + \gamma^2 M) A\\
        &= (I - \gamma T) A.
\end{align*}

Then we have, with $\mathcal{F}_{i-1}$ the $\sigma$ field generated by $X_1, \ldots, X_{i-1}$,
\begin{align*}
    \esp{M_{0, i} \eta_0 \eta_0^T M_{0, i}^T} &=
        \esp{\esp{M_{0, i} \eta_0 \eta_0^T M_{0, i}^T| \mathcal{F}_{i-1}}}\\
        &=
            \esp{\esp{(I - \gamma X_i X_i^T) M_{0, i-1} \eta_0 \eta_0^T M_{0, i-1}^T (I - \gamma X_i X_i^T) | \mathcal{F}_{i-1}}}\\
        &= \esp{(I - \gamma T) M_{0, i-1} \eta_0 \eta_0^T M_{0, i-1}}\\
        &= (I-\gamma T)\esp{M_{0, i-1} \eta_0 \eta_0^T M_{0, i-1}}.
\end{align*}

and by iterating this process, we obtain
\begin{align*}
    \esp{\bar{\eta}_n \bar{\eta}_n^T} =
    \frac{1}{n^2}&\sum_{i=0}^{n-1}
            (I - \gamma T)^{i} E_0\\
           &{}+ \left((I - \gamma T)^{i}E_0\right) \left((I - \gamma H) - (I - \gamma H)^{n-i}\right)(\gamma H)^{-1}\\
           &{}+ (\gamma H)^{-1}\left((I - \gamma H) - (I - \gamma H)^{n-i}\right)\left((I - \gamma T)^{i}E_0\right)\\
    = \frac{1}{n^2}&\sum_{i=0}^{n-1}
        \biggl(I + \left[(I-\gamma H)_L - (I -\gamma H)_L^{n-i}\right](\gamma H_L)^{-1} \\
           &{}+ \left[(I-\gamma H)_R - (I -\gamma H)_R^{n-i}\right](\gamma H_R)^{-1}
           \biggr)(I - \gamma T)^{i} E_0.
\end{align*}

Let us define
\begin{align*}
    A_n &= -\frac{1}{n^2} \sum_{i=0}^{n-1} \left(
    (\gamma H_R)^{-1}(I - \gamma H)_R^{n-i}+
        (\gamma H_L)^{-1}(I - \gamma H)_L^{n-i}
    \right)\left((I - \gamma T)^{i}E_0\right)\\
    \norm{A_n}_F &\leq
        \frac{2 d}{n \gamma \mu} \rho^n \norm{E_0}_F,
\end{align*}
which is decaying exponentially. \note{FB: a bit more steps there} \noteal{For bounding $A_N$ or for what follows?} We now have
\begin{align*}
    \esp{\bar{\eta}_n \bar{\eta}_n^T} &=
        \frac{1}{n^2}\sum_{i=0}^{n-1}
                \left(
                    I + (I-\gamma H_L)(\gamma H_L)^{-1} + (I-\gamma H_R)(\gamma H_R)^{-1}
                \right)(I - \gamma T)^{i} E_0 + A_n\\
    &=
    \frac{1}{\gamma^2 n^2}
            \left(H_L^{-1} + H_R^{-1}- \gamma I\right)T^{-1}\left(I - (I - \gamma T)^{n}\right) E_0
            +A_n.
\end{align*}

Again, we have some exponential terms, that we will regroup in $B_n$ with
\begin{align*}
    B_n &=-\frac{1}{\gamma^2 n^2} \left(H_L^{-1} + H_R^{-1}- \gamma I\right)T^{-1}(I - \gamma T)^{n}E_0\\
    \norm{B_n}_F &\leq \frac{d}{n^2 \gamma^2\mu_T}\rho_T^n \left(
        \frac{2}{\mu} - \gamma
    \right)\norm{E_0}_F,
\end{align*}
and we have
\begin{align*}
    \esp{\bar{\eta}_n \bar{\eta}_n^T} &=
        \frac{1}{n^2\gamma^2}\left(H_L^{-1} + H_R^{-1}- \gamma I\right)T^{-1}E_0
            +A_n + B_n.
\end{align*}
We can bound $A_n + B_n$ by
\begin{align*}
    \norm{A_n + B_n}_F \leq \frac{d \rho^n \norm{E_0}_F}{\gamma n}\left(
        \frac{2}{\mu} + \frac{1}{\mu_T n \gamma} \left(\frac{2}{\mu} - \gamma\right)
    \right),
\end{align*}
which completes the first assertion of Theorem~\ref{bias_theo}.

\ansubsection{Proof for the variance term}
Let assume now that $\eta_0 = 0$, then we have
\begin{align*}
    \bar{\eta}_n &= \frac{\gamma}{n} \sum_{k=1}^{n-1} \left(
                    \sum_{j=k}^{n-1} M_{k, j}
                 \right) X_k \eps_k ,
\end{align*}
and
\begin{align*}
\esp{\bar{\eta}_n \bar{\eta}_n^T} &=\frac{\gamma^2}{n^2}
        \esp{\sum_{k,l=1}^{n-1}\left( \sum_{j=k}^{n-1}M_{k, j}\right) X_k \eps_k
            \eps_l X_l^T \left(\sum_{p=l}^{n-1} M_{l, p}^T\right)}\\
    &=\frac{\gamma^2}{n^2}
        \esp{\sum_{k=1}^{n-1}\left( \sum_{j=k}^{n-1}M_{k, j}\right) X_k \eps_k
            \eps_k X_k^T \left(\sum_{p=k}^{n-1} M_{k, p}^T\right)}.
\end{align*}

Indeed, we can remove terms where $k \neq l$: if we have for instance $l < k$, then $X_l \eps_l$ will be independent from the rest of the terms and as $\esp{X_l \eps_l} = 0$, the term will be 0.

By using mostly the same method as for the bias term, we obtain that
\begin{align*}
    \esp{\bar{\eta}_n \bar{\eta}_n^T} =
    &\frac{\gamma^2}{n^2}
            \sum_{k=1}^{n-1} \sum_{j=k}^{n-1} \left(I - \gamma T\right)^{j-k}\Sigma_0\\
            &{} + \left((I - \gamma H) - (I - \gamma H)^{n - j}\right)(\gamma H)^{-1} \left(\left(I - \gamma T\right)^{j-k}\Sigma_0\right)\\
            &{} + \left(\left(I - \gamma T\right)^{j-k}\Sigma_0\right)\left((I - \gamma H) - (I - \gamma H)^{n - j}\right)(\gamma H)^{-1} \\
        =   &\frac{\gamma^2}{n^2}
            \sum_{j=1}^{n-1} \sum_{k=1}^{j} \left(I - \gamma T\right)^{j-k}\Sigma_0\\
            &{} + \left((I - \gamma H) - (I - \gamma H)^{n - j}\right)(\gamma H)^{-1} \left(\left(I - \gamma T\right)^{j-k}\Sigma_0\right)\\
            &{} + \left(\left(I - \gamma T\right)^{j-k}\Sigma_0\right)\left((I - \gamma H) - (I - \gamma H)^{n - j}\right)(\gamma H)^{-1} \\
        = &\frac{\gamma^2}{n^2}\sum_{j=1}^{n-1}
        \left(I - (I - \gamma T)^j\right) (\gamma T)^{-1} \Sigma_0\\
         &+\left((I - \gamma H) - (I - \gamma H)^{n - j}\right)(\gamma H)^{-1} \left(I - (I - \gamma T)^j\right) (\gamma T)^{-1} \Sigma_0\\
        &+ \left(I - (I - \gamma T)^j\right) (\gamma T)^{-1} \Sigma_0
        \left((I - \gamma H) - (I - \gamma H)^{n - j}\right)(\gamma H)^{-1}.
\end{align*}

As for the bias, we can bound some terms:
\begin{align*}
    C_n &=\frac{\gamma^2}{n^2}\sum_{j=1}^{n-1}\left(
    (I - \gamma H)_L^{n - j}(\gamma H_L)^{-1}+
    (I - \gamma H)_R^{n - j}(\gamma H_R)^{-1}
    \right) (I - \gamma T)^j (\gamma T)^{-1} \Sigma_0 \\
    \norm{C_n}_F&\leq \frac{2 d}{n \mu \mu_T} \rho^n \norm{\Sigma_0}_F.
\end{align*}
Now we have,
\begin{align*}
    \esp{\bar{\eta}_n \bar{\eta}_n^T} =
        &
        \frac{1}{n^2}
        \sum_{j=1}^{n-1}
        \left(
            H_L^{-1} + H_R^{-1} - \gamma I
        \right) \left(I - (I - \gamma T)^j\right)
        T^{-1} \Sigma_0 +C_n\\
    = &
    \frac{1}{n}
        \left(
            H_L^{-1} + H_R^{-1} - \gamma I
        \right)
        T^{-1} \Sigma_0
    + D_n
     + C_n,
\end{align*}
where $D_n$ is defined by
\begin{align*}
    D_n = &-\frac{1}{n^2}
        \sum_{j=1}^{n-1}
        \left(
            H_L^{-1} + H_R^{-1} - \gamma I
        \right) (I - \gamma T)^j
        T^{-1} \Sigma_0\\
    = &-\frac{1}{\gamma n^2} \left(
            H_L^{-1} + H_R^{-1} - \gamma I
        \right) (I - \gamma T) T^{-2} \Sigma_0
        + D'_n.
\end{align*}
$D'_n$ are again exponentially decreasing terms:
\begin{align*}
    D'_n =  &\frac{1}{\gamma n^2} \left(
            H_L^{-1} + H_R^{-1} - \gamma I
        \right) (I - \gamma T)^n T^{-2} \Sigma_0\\
    \norm{D'_n}_F &\leq \frac{d}{\gamma^2\mu_T^2 n}\left(\frac{2}{\mu} - \gamma\right) \rho_T^n \norm{\Sigma_0}_F,
\end{align*}
so that we have
\begin{align}
\label{eq:full_expr_var}
    \esp{\bar{\eta}_n \bar{\eta}_n^T} &=
        \frac{1}{n}
            \left(
                H_L^{-1} + H_R^{-1} - \gamma I
            \right)
            T^{-1} \Sigma_0 -\frac{1}{\gamma n^2} \left(
            H_L^{-1} + H_R^{-1} - \gamma I
        \right) (I - \gamma T) T^{-2} \Sigma_0
        + C_n + D'_n.
\end{align}
We can bound $C_n + D'_n$ by
\begin{align*}
    \norm{C_n + D'_n}_F \leq \frac{d \rho^n \norm{\Sigma_0}_F}{n}\left(
        \frac{1}{n \gamma \mu_T^2} \left(\frac{2}{\mu} - \gamma\right) +
        \frac{2}{\mu \mu_T}
    \right).
\end{align*}
This concludes the proof of Theorem~\ref{var_theo}.

\clearpage

    \bibliography{biblio}
\end{document}